\documentclass[11pt]{article}

\usepackage[preprint]{acl}

\usepackage{times}
\usepackage{latexsym}
\usepackage[T1]{fontenc}
\usepackage[utf8]{inputenc}
\usepackage{microtype}
\usepackage{inconsolata}
\usepackage{graphicx}
\usepackage{booktabs}
\usepackage{amsfonts}
\usepackage{amsmath}
\usepackage{mathtools}
\usepackage{amsthm}
\usepackage{amssymb}
\usepackage{nicefrac}
\usepackage{multirow}
\usepackage{subcaption}
\usepackage{algorithm}
\usepackage{algorithmic}
\usepackage{pifont}
\usepackage[table,xcdraw]{xcolor}
\usepackage[capitalize,noabbrev]{cleveref}
\usepackage{enumitem} 

\newcommand{\cmark}{\ding{51}}
\newcommand{\xmark}{\ding{55}}
\definecolor{darkgreen}{RGB}{0, 100, 0}

\theoremstyle{plain}
\newtheorem{theorem}{Theorem}[section]

\theoremstyle{definition}

\theoremstyle{remark}

\title{ZipRL: Adaptive Multi-Turn Context Compression with Hindsight Response Replay}

\author{
  \textbf{Zhexin Hu}$^{1, 2}$\thanks{Equal contribution} \quad
  \textbf{Li Wang}$^{1}$\footnotemark[1] \quad
  \textbf{Xiaohan Wang}$^{1,*}$\thanks{Corresponding authors} \quad
  \textbf{Jiajun Chai}$^{1}$ \quad
  \textbf{Xiaojun Guo}$^{1}$ \\
  \textbf{Wei Lin}$^{1}$ \quad
  \textbf{Guojun Yin}$^{1}$\footnotemark[2] \\
  \\
  $^{1}$Meituan \\
  $^{2}$Institute of Software, Chinese Academy of Sciences
}

\begin{document}
\maketitle

\begin{abstract}
Adaptive context compression is vital for scaling Large Language Models (LLMs) to complex, multi-turn agent tasks. However, rule-based compression methods may discard task-critical nuances, while Reinforcement Learning (RL) approaches usually struggle to balance information retention and token efficiency under the sparse rewards inherent to long-horizon workflows. To bridge this gap, we propose \textbf{ZipRL}, a novel adaptive compression framework tailored for Reinforcement Learning from Verifiable Rewards (RLVR). ZipRL features a multi-granularity compression mechanism for active, non-uniform information reduction, coupled with Hindsight Response Replay (HRR), a technique designed to densify training signals during RLVR optimization. Theoretically, we prove ZipRL's superior task-relevant utility over uniform methods. Concretely, ZipRL utilizes coarse-to-fine prompts for macro-compression and incorporates HRR into GRPO via generalized advantage reshaping. Multiple models of varying versions and parameter scales validate the effectiveness of our approach. Benchmarks on five agent tasks show ZipRL outperforms state-of-the-art approaches by 27.9\% and 34.7\% across Qwen3-4B and Qwen3-8B models, while maintaining exceptional token efficiency and robustness under extreme 256-turn extrapolation stress tests. Our code is available at \url{https://github.com/huzhexin/ZipRL}.
\end{abstract}

\section{Introduction}

Large language models (LLMs) have emerged as powerful agents for solving long-horizon tasks that require extended reasoning, planning, and interaction with external environments \citep{ke2025survey, wu2024beyond, wu2026atlas}. Effective context management is crucial for unlocking the full potential of these capabilities, as it strategically optimizes the use of the finite context window to ensure coherent, accurate, and efficient model responses \citep{shao2024deepseekmath, song2023llm}. For example, recent research has integrated LLMs with external environments, such as research agents, enabling LLMs to retrieve and process dynamic information for open-ended problem solving~\citep{jin2025search}. However, in such multi-turn scenarios, particularly agentic search, the accumulation of retrieved documents and interaction history rapidly consumes the limited context window. This necessitates effective context compression strategies that preserve critical information while maintaining token efficiency, ensuring the agent operates within computational constraints without sacrificing performance.

\begin{figure*}[t]
    \centering
    \includegraphics[width=0.95\textwidth]{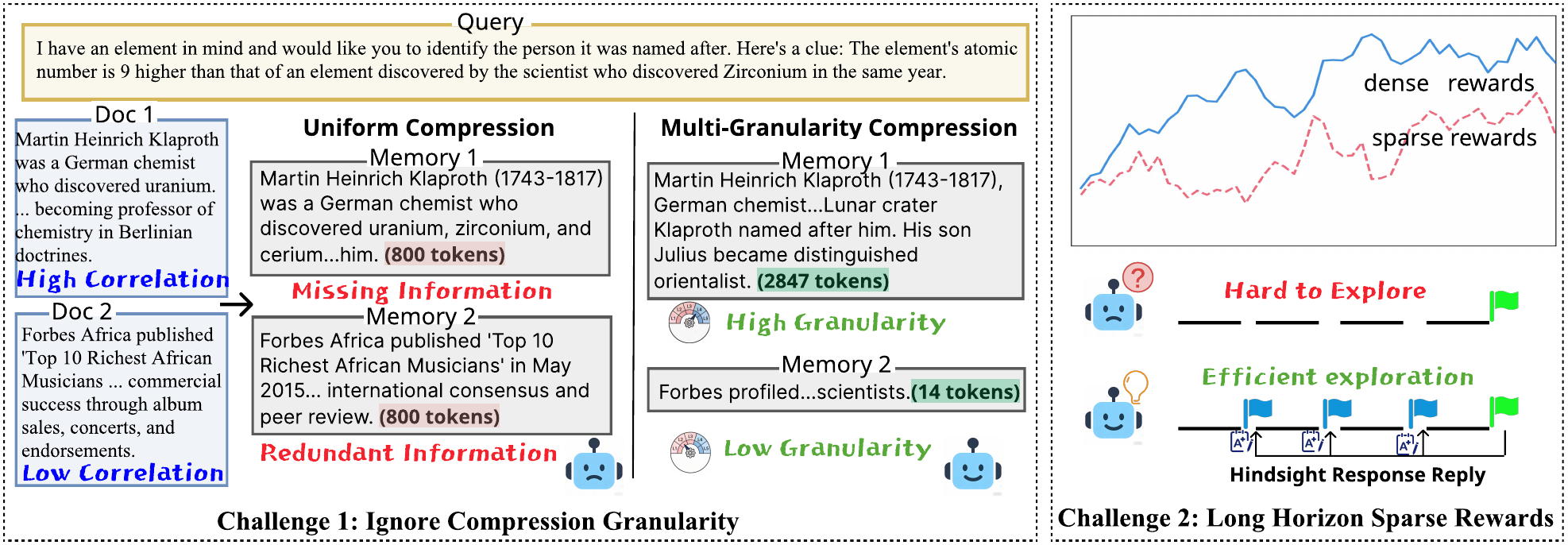}
    \vspace{-0.2cm}
    \caption{Challenges faced by current long-context compression and search agents, with ZipRL addressing these by incorporating the Multi-Granularity Mechanism and Hindsight Response Replay for improved compression performance.}
    \label{fig:ziprl_motivation}
    \vspace{-0.3cm}
\end{figure*}

Despite the critical role of context compression, existing approaches 
encounter two key challenges, as illustrated in 
Figure~\ref{fig:ziprl_motivation}. First, current methods typically 
employ uniform processing granularity~\citep{song2025r1, 
he2025webseer}, treating all retrieved segments with equal importance. 
This lack of relevance awareness is suboptimal from an 
information-theoretic perspective: as we demonstrate, uniform 
compression often fails to maximize the task-relevant information 
utility retained from the context for the target query. While 
irrelevant documents introduce noise, critical details in relevant 
documents may be lost if over-compressed. Second, optimizing 
compression policies via Reinforcement Learning (RL) is notoriously 
difficult due to sparse reward signals brought by long-horizon 
multi-turn interaction~\citep{yu2025memagent}. In multi-turn tasks, 
supervision is often only available at the final outcome (e.g., 
success or failure), making it challenging to credit specific 
intermediate compression actions. Although Process Reward Models 
(PRMs)~\citep{ma2023let, zhang2025lessons} attempt to mitigate this, 
their performance relies heavily on task-specific reward designs and 
may encourage greedy strategies.

To address these challenges, we introduce ZipRL, an adaptive multi-turn context compression framework designed to balance information preservation and efficiency. Grounded in the principle that multi-granularity compression preserves more task-relevant information utility, ZipRL enables the agent to actively perceive the relevance of retrieved content and dynamically select macro-level compression strategies (coarse-to-fine) via in-context prompts. Furthermore, to tackle the sparse reward problem in RL training with context compression, we propose Hindsight Response Replay (HRR) inspired by the idea of Hindsight Experience Replay (HER) in conventional RL. Instead of relying on expensive external PRMs, HRR utilizes a context-oriented heuristic metric to score compression segments efficiently. These scores are then integrated into Group Relative Policy Optimization (GRPO) through advantage re-shaping, densifying the training signals and allowing the model to learn optimal compression policies from the final outcome reward. Experimental results on five Multi-hop QA and Web Browsing benchmarks across multiple models of varying versions and parameter scales validate the effectiveness of our approach. It outperforms the strongest baselines by 27.9\% and 34.7\% on average for 4B and 8B parameter models, respectively, while maintaining superior performance in long-context interactions (up to 256 turns) with high token efficiency. Our contributions are summarized as follows:

\begin{itemize}[itemsep=0pt, parsep=0pt, topsep=2pt, partopsep=0pt]
    \item We identify the limitations of uniform processing in long-context tasks and theoretically validate that multi-granularity compression better preserves task-relevant information utility under general utility functions satisfying concavity and monotone marginal utility. Based on this, we propose an active compression mechanism that adaptively determines compression levels based on document-query relevance.
    \item We propose ZipRL, a novel framework that integrates adaptive 
    compression with robust RL optimization. We introduce Hindsight 
    Response Replay (HRR), which leverages a heuristic-based metric to 
    densify rewards via advantage re-shaping. We define a context-oriented 
    metric to score compression segments, leveraging reward densification 
    to optimize the average score towards a desired goal. HRR effectively 
    mitigates the sparse reward issue in multi-turn context compression 
    without the overhead of external reward models.
    \item Extensive experiments across five benchmarks in Web Browsing and Multi-hop QA tasks on multiple models of varying versions and parameter scales validate the effectiveness of our approach. Specifically, ZipRL outperforms the strongest same-scale specialized agents (e.g., ASearcher and AgentFold) by 27.9\% and 34.7\% average EM for Qwen3-4B and Qwen3-8B models respectively, during extreme extrapolation stress tests up to 256 turns ($12.8\times$ training horizon).
\end{itemize}

\section{Related Work}

\subsection{Context Compression for Agents}
Recent autonomous agents tackle long-horizon tasks via external retrieval~\citep{jin2025search, nguyen2025sfr}, necessitating effective context management to fit limited context windows. Existing methods employ external storage~\citep{zhong2024memorybank, salama2025meminsight}, internal memory updates~\citep{zhou2025mem1, ge2023context, huang2024recurrent}, or multi-agent collaboration~\citep{zhang2024chain}. Static prompt compression (e.g., LLMLingua~\citep{jiang2023llmlingua}, RECOMP~\citep{xu2023recomp}) reduces context pre-inference but lacks dynamic adaptability. While recent variable-rate compression methods (ACC-RAG~\citep{guo2025enhancing}, AttnComp~\citep{luo2025attncomp}, SARA~\citep{jin2025sara}) improve single-turn token efficiency, they operate in \emph{static} settings without multi-turn action adaptability or end-task RL feedback (see Appendix~\ref{appendix:rag_compression_comparison} for taxonomy). Crucially, most methods~\citep{yu2025memagent, xu2026mem} overlook the non-uniform relevance across retrieved documents. To address this, our multi-granularity mechanism dynamically adapts compression levels based on information density, thereby retaining more task-critical information.

\subsection{Sparse Rewards in RL}

Sparse rewards in long-horizon interactions severely hinder RL 
efficiency~\citep{riedmiller2018learning}. Traditional RL mitigates 
this via Hierarchical RL~\citep{pateria2021hierarchical, barto2003recent}, 
Curriculum Learning (CL)~\citep{wang2021survey}, or Hindsight Experience 
Replay (HER)~\citep{andrychowicz2017hindsight}. In the LLM context, 
approaches like CLPO~\citep{zhang2025clpo} and Process Reward Models 
(PRMs)~\citep{ma2023let, zhang2025lessons} attempt to densify training 
signals, but they often rely on expensive external models or task-specific 
designs. In contrast, ZipRL draws inspiration from HER to integrate 
hindsight-based reward densification directly into multi-turn agent 
training, alleviating the sparse reward bottleneck without external 
PRM overhead.
\section{Method}

\begin{figure*}[t]
    \centering
    \includegraphics[width=0.95\textwidth]{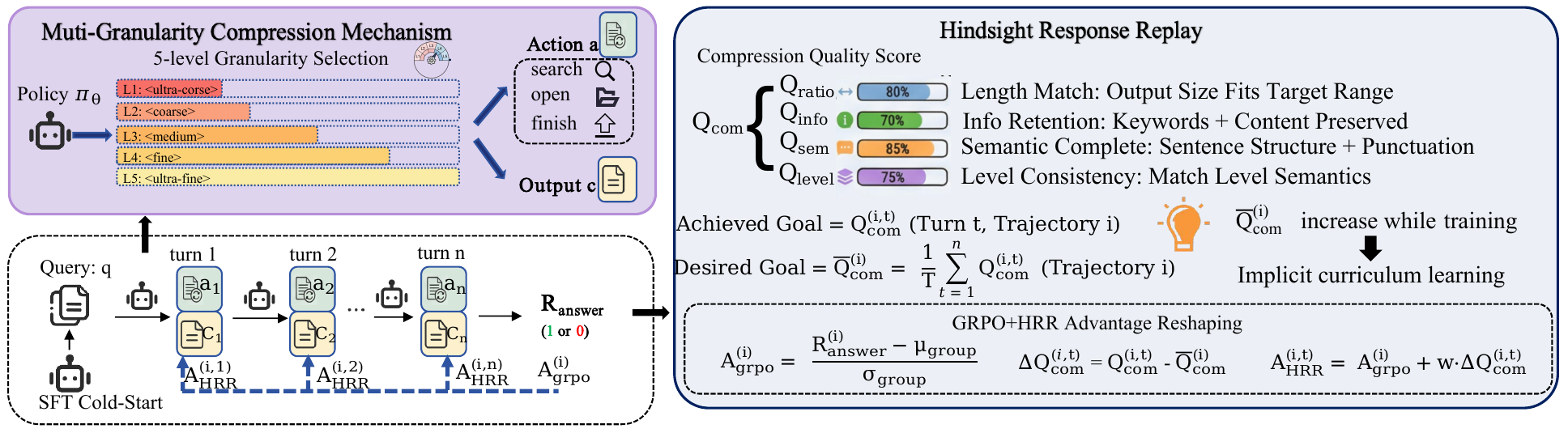}
    \vspace{-0.2cm}
    \caption{Overview of the ZipRL framework, including the Multi-Granularity Mechanism, Compression Quality Evaluation, and Hindsight Response Replay.}
    \label{fig:ziprl_framework}
    \vspace{-0.3cm}
\end{figure*}

Background on Hindsight Experience Replay (HER)~\citep{andrychowicz2017hindsight} and its relationship to our proposed HRR is provided in Appendix~\ref{appendix:her_preliminary}. As illustrated in Figure~\ref{fig:ziprl_framework}, we present the proposed adaptive long-text search agent along with the ZipRL algorithm. The Cold Start phase is detailed in Appendix~\ref{appendix:cold_start}, and the multi-granularity mechanism is described in Section~\ref{sec:multi_gran_mechanism}. Section~\ref{sec:compress_quality_eval} discusses the evaluation of compression quality, while Section~\ref{sec:hindsignt_response_reply} introduces the hindsight response replay mechanism. Finally, the reinforcement learning training objective is formalized in Section~\ref{sec:rl_training}.

\subsection{Cold Start}
\label{sec:cold_start}

Before RL training, we initialize the policy via a Cold Start SFT phase on 1,155 high-quality trajectories synthesized by GPT-4o, equipping the model with basic compression capabilities before RL optimization. Full details are in Appendix~\ref{appendix:cold_start}.

\subsection{Multi-Granularity Mechanism}
\label{sec:multi_gran_mechanism}

In real-world applications, documents and queries often exhibit varying 
degrees of relevance, which necessitates different compression 
granularities. To enable such adaptive compression behavior, we introduce 
a \textbf{Multi-Granularity Mechanism}. This mechanism leverages carefully 
designed prompts (see Appendix~\ref{appendix:prompt_details}) to guide the 
model in identifying the amount of query-relevant information within the 
retrieved text. It then selects an appropriate compression granularity, 
denoted by $g$, and allocates different weights according to textual 
relevance, ensuring that more valuable information is preserved. 
Specifically, we first analyze the text and select $k$ compression 
granularities (see Appendix~\ref{sec:granularity_selection} for details). 
For each granularity, we define a compression range 
$[\mathcal{L}^g_l, \mathcal{L}^g_h]$ based on the text length at that 
granularity. Documents with stronger relevance typically contain more 
important information; therefore, we apply a higher compression level and 
allocate a larger compression range to retain as much important information 
as possible. Conversely, we limit retention for weakly relevant documents 
to avoid noise. This adaptive strategy maximizes salient information 
retention while effectively filtering out irrelevancies.

\subsection{Compression Quality Evaluation}
\label{sec:compress_quality_eval}

To evaluate the quality of compressed context without relying on expensive human annotation or unstable external reward models, we design a multi-dimensional heuristic scoring function. A high-quality compressed summary should satisfy two complementary objectives: \textit{structural conformity} (adhering to length and format constraints) and \textit{semantic preservation} (retaining critical information). We construct the final score $Q_{\text{com}}$ based on four distinct dimensions. In this section, we outline the design rationale for each dimension; for detailed formulations, please refer to Appendix~\ref{appendix:score_details}.

\vspace{0.5em}
\noindent \textbf{Structure Objective:}
This objective enforces the length and style constraints specified by the target compression level $g$.

\paragraph{Compression Ratio Score $Q_{\text{ratio}}$.}
It measures whether the compressed length $l_y$ falls within the target range $[\mathcal{L}_l^g, \mathcal{L}_h^g]$ defined for level $g$. Unlike a simple binary check, we implement a soft penalty mechanism (see Figure~\ref{fig:r_ratio_plot} in Appendix~\ref{appendix:score_details}). Short outputs incur a linear penalty for information loss, while long outputs receive a non-linear penalty for redundancy.

\paragraph{Level Strategy Consistency Score $Q_{\text{level}}$.}
Different compression levels require different granularities, not just different lengths. For instance, Level 1 (Ultra-coarse) should consist of few concise sentences, whereas Level 5 (Ultra-fine) allows for detailed narration. We define expected bounds for sentence count $S_g$ and word count $W_g$ for each level. The score $R_{\text{level}}$ penalizes outputs that meet the character limit but violate the structural density.

\vspace{0.5em}
\noindent \textbf{Semantic Objective:}
Structural correctness is meaningless if the core meaning is lost or the text becomes incoherent. These dimensions evaluate the content quality.

\paragraph{Information Retention Score $Q_{\text{info}}$.}
To ensure the compressed text $y$ retains key information from the 
original text $x$ relevant to the query $q$, we employ a keyword-based 
overlap metric. We extract keyword sets $K_q$ from the query and $K_x$ 
from the original text via tokenization, stopword removal, and filtering 
for tokens with length $\ge 4$ characters. $Q_{\text{info}}$ is computed as a weighted combination of query-specific keyword coverage ($S_{\text{key}}$) and general content retention ($S_{\text{gen}}$), with $\lambda_1 = 0.7$ and $\lambda_2 = 0.3$. The denominator of $S_{\text{key}}$ uses $|K_q \cap x|$ rather than $|K_q|$, restricting evaluation to keywords \emph{actually present} in the source document to avoid penalizing irretrievable keywords. When $|K_q \cap x| = 0$, the system falls back to $S_{\text{gen}}$ alone. Full formulations are provided in Appendix~\ref{appendix:score_details}.

\paragraph{Semantic Completeness Score $Q_{\text{sem}}$.}
Aggressive compression often leads to broken sentences or linguistic fragmentation. We evaluate semantic completeness by checking for valid sentence terminators (punctuation), reasonable sentence structures, and vocabulary coherence. This acts as a linguistic validity filter, ensuring the compressed text remains natural and readable.

\vspace{0.5em}
\noindent \textbf{Total Score:}
The final compression quality score is a weighted sum of these four complementary dimensions:
\begin{equation}
    Q_{\text{com}} = \alpha_1 Q_{\text{ratio}} + \alpha_2 Q_{\text{level}} + \alpha_3 Q_{\text{info}} + \alpha_4 Q_{\text{sem}}
\end{equation}
In our implementation, we prioritize information retention and length compliance, setting weights as $\alpha=\{0.3, 0.1, 0.4, 0.2\}$ respectively. This composite score guides the model to balance the trade-off between brevity (structure) and utility (semantic).

\subsection{Hindsight Response Replay}
\label{sec:hindsignt_response_reply}

In multi-turn task settings, it is often difficult to distinguish the contribution of each individual turn solely based on the final reward signal. Inspired by the success of Hindsight Experience Replay (HER)~\citep{andrychowicz2017hindsight} in converting sparse binary feedback into dense learning signals, we propose \textbf{Hindsight Response Replay (HRR)} as an \emph{advantage reshaping} technique tailored for multi-turn LLM agent training. While HER achieves reward densification by relabeling goals and replaying transitions under substitute objectives, HRR achieves an analogous effect through a different mechanism: it uses a turn-level compression quality score to redistribute the trajectory-level advantage computed by GRPO, assigning denser credit to individual turns without requiring a replay buffer, goal-conditioned policy, or explicit goal substitution (see Table~\ref{tab:her_vs_hrr} for a detailed comparison). The key insight shared with HER is that non-optimal trajectories still contain valuable per-step information: even when the final task reward is zero, individual turns may exhibit high-quality compression behavior that should be reinforced. In GRPO, the advantage for each trajectory $ i $ is computed as:
\begin{equation}
A^{(i)}_{\text{GRPO}} = \frac{R^{(i)}_{\text{final}} - \mu_{\text{group}}}{\sigma_{\text{group}}},
\end{equation}
where $R^{(i)}_{\text{final}}$ denotes the final reward for trajectory $i$, and $\mu_{\text{group}}$ and $\sigma_{\text{group}}$ represent the mean and standard deviation of the group rewards, respectively. Each turn $ j $ in trajectory $ i $ uses the same advantage $ A^{(i)}_{\text{GRPO}} $, which makes it difficult for the agent to differentiate the contribution of each individual turn, potentially increasing the difficulty of training.

To address this limitation, HRR utilizes our compression quality score to reinterpret the per-turn contribution within each trajectory, alleviating the issue of sparse rewards by providing turn-level credit assignment. Concretely, we treat the achieved compression quality as a \emph{dynamic reference point} (analogous to how HER treats the achieved state as a substitute goal), allowing us to ``replay'' the advantage calculation by measuring each turn's distance to this reference. Specifically, for each trajectory $ i $ consisting of $ T $ turns, we compute the average compression quality score as the trajectory-level reference:
\begin{equation}
\bar{Q}_{\mathrm{com}}^{(i)} = \frac{1}{T} \sum_{j=1}^{T} Q_{\mathrm{com}}^{(i,j)}.
\end{equation}
For each turn in trajectory $ i $, we treat the corresponding compression quality score as the state and compare it with the expected target $ \bar{Q}_{\mathrm{com}}^{(i)} $ to evaluate its quality. This allows us to reallocate the trajectory-level advantage from GRPO at the turn level. Formally, for the $ j $-th turn in trajectory $ i $, the final advantage is defined as:
\begin{equation}
A^{(i,j)} = A_{\mathrm{GRPO}}^{(i)} + w \cdot \left( Q_{\mathrm{com}}^{(i,j)} - \bar{Q}_{\mathrm{com}}^{(i)} \right), \label{equ:advantage_adjust}
\end{equation}
where $ w $ is an advantage reshaping coefficient that controls the 
strength of hindsight relabeling. When the $ j $-th turn in trajectory 
$ i $ has a compression quality score greater than the trajectory 
average, the advantage value is increased. Conversely, if the score 
is smaller than the average, the advantage value is decreased. The 
magnitude of this adjustment is determined by the absolute difference 
in compression quality scores, thus providing a denser training signal 
for different turns in a trajectory and more accurately evaluating each 
turn's contribution. As training progresses, the model's compression 
quality improves (see Section~\ref{sec:experiments}), leading to a 
higher average compression quality score, thereby adaptively adjusting 
the expected training target. This approach alleviates the sparse reward 
problem and enables better credit assignment.

\subsection{Training Objectives}
\label{sec:rl_training}

Based on hindsight-relabeled dense advantages, ZipRL uses the following objective, which facilitates stable and efficient training through the $\mathcal{J}_{\text{ZipRL}}(\theta)$ objective:
{\small
\begin{equation}
\begin{aligned}
    \mathcal{J}_{\text{ZipRL}}(\theta) 
    &= \mathbb{E}_{q,\{o^i\}} \Bigg[
        \frac{1}{G}\sum_{i=1}^G \frac{1}{|o^i|}
        \sum_{j=1}^{T}\sum_{t=1}^{|o^{(i,j)}|} \\
    &\qquad\qquad\quad
        \left(m^{(i,j)}_t 
        - \beta\,\mathbb{D}_{\text{KL}}[\pi_\theta\|\pi_{\text{ref}}]\right)
    \Bigg], \\[4pt]
    m^{(i,j)}_t 
    &= \min\!\left(
        \rho^{(i,j)}_t A^{(i,j)},\;
        \operatorname{clip}\!\left(\rho^{(i,j)}_t,\, 1{-}\epsilon,\, 1{+}\epsilon\right)
        A^{(i,j)}
    \right), \\[4pt]
    \rho^{(i,j)}_t 
    &= \frac{
        \pi_\theta\!\left(o^{(i,j)}_t \mid q,\, o^{(i,j)}_{<t}\right)
    }{
        \pi_{\theta_{\text{old}}}\!\left(o^{(i,j)}_t \mid q,\, o^{(i,j)}_{<t}\right)
    }.
\end{aligned}
\end{equation}
}

\section{Theoretical Analysis}
We provide a utility-theoretic justification for multi-granularity compression, showing that relevance-aware resource allocation can improve expected downstream utility over uniform compression under the same average resource budget.

\begin{theorem}[Utility Advantage of Relevance-Aware Allocation]
\label{theorem:adaptive_compression}
Let $R\sim P$ be a relevance random variable supported on 
$\mathcal{R}\subseteq\mathbb{R}$. Let $\bar{\alpha}>0$ be the average 
resource budget, and let $\mathcal{A}\subseteq\mathbb{R}_{++}$ be an open 
interval with $\bar{\alpha}\in\mathcal{A}$. Consider a utility function 
$\Phi:\mathcal{R}\times\mathcal{A}\to\mathbb{R}$ such that 
$\Phi(\cdot,\alpha)$ is measurable for every $\alpha\in\mathcal{A}$, and 
$\Phi(r,\cdot)$ is differentiable and strictly concave on $\mathcal{A}$ for 
every $r\in\mathcal{R}$. Let the uniform allocation be $\alpha_{\rm uni}(r)\equiv\bar{\alpha}$, and let the relevance-aware adaptive allocation be $\alpha_{\rm ada}(r)=f(r)$, where $f:\mathcal{R}\to\mathcal{A}$ is measurable, non-decreasing, and satisfies
\[
\mathbb{E}[f(R)]=\bar{\alpha},
\qquad
\mathbb{P}\!\left(f(R)\neq\bar{\alpha}\right)>0 .
\]
Define the allocated marginal utility
$$
G_f(r)
=
\partial_{\alpha}\Phi(r,\alpha)\big|_{\alpha=f(r)} .
$$
Assume that $G_f$ is measurable and non-decreasing, and that
$\Phi(R,f(R))$, $\Phi(R,\bar{\alpha})$, $G_f(R)$, and $G_f(R)f(R)$ are 
integrable. Then
$$
\mathbb{E}\!\left[\Phi(R,f(R))\right]
>
\mathbb{E}\!\left[\Phi(R,\bar{\alpha})\right].
$$
\end{theorem}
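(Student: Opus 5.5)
The plan is to combine the supporting-line (first-order) characterization of concavity with a Chebyshev-type covariance inequality for comonotone functions. For each fixed $r\in\mathcal{R}$, the map $\Phi(r,\cdot)$ is differentiable and strictly concave on the open interval $\mathcal{A}$. Both $f(r)$ and $\bar{\alpha}$ lie in $\mathcal{A}$, so the tangent line at $f(r)$ gives
\[
\Phi(r,\bar{\alpha})\le \Phi(r,f(r))+G_f(r)\bigl(\bar{\alpha}-f(r)\bigr),
\]
with strict inequality whenever $f(r)\neq\bar{\alpha}$. Rearranging, the pointwise gain satisfies
\[
D(r):=\Phi(r,f(r))-\Phi(r,\bar{\alpha})\ \ge\ G_f(r)\bigl(f(r)-\bar{\alpha}\bigr).
\]

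Next I take expectations. The integrability hypotheses on $\Phi(R,f(R))$, $\Phi(R,\bar{\alpha})$, $G_f(R)$ and $G_f(R)f(R)$ make both sides integrable. Define the non-negative random variable $Z:=D(R)-G_f(R)(f(R)-\bar{\alpha})$; it is strictly positive on the event $\{f(R)\neq\bar{\alpha}\}$. That event has positive probability by assumption, so $\mathbb{E}[Z]>0$, and hence
\[
\mathbb{E}[D(R)]>\mathbb{E}\bigl[G_f(R)(f(R)-\bar{\alpha})\bigr].
\]
Since $\mathbb{E}[f(R)]=\bar{\alpha}$, the right-hand side equals $\mathbb{E}[G_f(R)f(R)]-\mathbb{E}[G_f(R)]\,\mathbb{E}[f(R)]=\operatorname{Cov}(G_f(R),f(R))$. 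It therefore suffices to show that this covariance is non-negative.

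For the covariance step I use the standard Chebyshev/Harris argument with an independent copy. Let $R'$ be independent of $R$ with the same law $P$. Because $G_f$ and $f$ are both non-decreasing, $(G_f(R)-G_f(R'))(f(R)-f(R'))\ge 0$ pointwise. Expanding its expectation gives $2\,\mathbb{E}[G_f(R)f(R)]-2\,\mathbb{E}[G_f(R)]\,\mathbb{E}[f(R)]$. Here cross terms such as $\mathbb{E}[G_f(R)f(R')]$ factor by independence, and they are finite because $G_f(R)$ is integrable and $f(R)\in\mathbb{R}_{++}$ has finite mean $\bar{\alpha}$. This yields $\operatorname{Cov}(G_f(R),f(R))\ge 0$, so $\mathbb{E}[D(R)]>0$, which is the claim.

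The main obstacle is bookkeeping rather than ideas. The strict inequality must come from the concavity step on a positive-probability set, not from the covariance step, which only gives a weak inequality. One must also check that every expectation being split is finite, so that no $\infty-\infty$ appears. Both points are handled by the stated integrability assumptions and by defining $Z$ explicitly as a non-negative integrable variable before splitting any expectations.
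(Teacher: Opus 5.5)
Your proposal is correct and follows essentially the same route as the paper's proof. Both use the tangent-line inequality at $f(r)$, an explicit nonnegative slack variable that is strictly positive on $\{f(R)\neq\bar{\alpha}\}$ to get strictness, the budget constraint to turn the remaining term into $\operatorname{Cov}(G_f(R),f(R))$, and the independent-copy Chebyshev argument for its nonnegativity; your remark on the finiteness of cross terms such as $\mathbb{E}[G_f(R)f(R')]$ makes explicit a point the paper leaves implicit.
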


\noindent\textbf{Remark.}
The monotonicity of $G_f$ means that the relevance-induced increase in marginal utility dominates the diminishing returns caused by larger allocations. Theorem~\ref{theorem:adaptive_compression} formalizes the
benefit of multi-granularity compression: under a fixed average resource budget, allocating more resources to more relevant documents yields a strict utility gain when the allocation is positively aligned with the marginal utility of additional compression resources. The proof is deferred to Appendix~\ref{app:proof_adaptive_compression}.

\section{Experiments}
\label{sec:experiments}

\subsection{Experimental Setup}

\begin{table*}[h!]
\centering
\footnotesize
\renewcommand{\arraystretch}{0.85}
\caption{Performance Comparison of Different Methods on Multi-hop QA and Web Browsing Tasks Across Five Datasets. BC-Plus denotes BrowseComp-plus. The \textbf{bold} and \underline{underline} indicate the best and second-best results, respectively.}
\label{tab:main_results}
\setlength{\tabcolsep}{6pt}
\resizebox{\textwidth}{!}{%
\begin{tabular}{@{}lcccccccccccc@{}}
\toprule
\multirow{2}{*}{\textbf{Model}} 
  & \multicolumn{2}{c}{\textbf{MusiQue}} 
  & \multicolumn{2}{c}{\textbf{SQuAD}} 
  & \multicolumn{2}{c}{\textbf{Frames}} 
  & \multicolumn{2}{c}{\textbf{Bamboogle}} 
  & \multicolumn{2}{c}{\textbf{BC-plus}} 
  & \multicolumn{2}{c}{\textbf{Average}} \\
\cmidrule(lr){2-3} \cmidrule(lr){4-5} \cmidrule(lr){6-7} 
\cmidrule(lr){8-9} \cmidrule(lr){10-11} \cmidrule(lr){12-13}
 & EM & F1 & EM & F1 & EM & F1 & EM & F1 & EM & F1 & EM & F1 \\
\midrule
\multicolumn{13}{l}{\textit{ReAct-based Methods}} \\
\midrule
Qwen3-235B-ReAct    & 18.2 & 30.4 & 17.8 & 32.3 & 14.5 & 27.4 & 40.1 & 52.5 & 14.0 & 19.5 & 20.9 & 32.4 \\
DeepSeek-V3.2-ReAct & 9.6  & 17.3 & 10.2 & 22.9 & 10.1 & 18.4 & 32.0 & 40.6 & 8.8  & 13.3 & 14.1 & 22.5 \\
Gemini-3-Pro-ReAct  
  & \textbf{32.2} & \textbf{44.7} 
  & \textbf{21.4} & \textbf{38.2} 
  & \textbf{24.8} & \textbf{38.7} 
  & \textbf{54.4} & \textbf{67.0} 
  & 22.8 & 27.8
  & \textbf{31.1} & \textbf{43.3} \\
GPT-4o-ReAct        
  & 11.8 & 28.9 
  & 8.6  & 24.5 
  & 15.8 & 32.6 
  & 39.2 & 55.3 
  & \textbf{25.2} & \textbf{31.6} 
  & 20.1 & 34.6 \\
LongCat-Flash-ReAct 
  & \underline{24.4} & \underline{37.5} 
  & \underline{18.6} & \underline{35.3} 
  & \underline{23.5} & \underline{38.0} 
  & \underline{48.8} & \underline{62.0} 
  & \underline{24.4} & \underline{28.5}
  & \underline{27.9} & \underline{40.3} \\
\midrule
\multicolumn{13}{l}{\textit{Summary-based Methods}} \\
\midrule
Qwen3-235B-Summary    
  & \underline{17.8} & \textbf{33.2}
  & \textbf{13.0} & 28.1 
  & 15.2 & 29.9 
  & \underline{40.0} & \underline{54.7} 
  & 14.0 & 19.2 
  & 20.0 & 33.0 \\
DeepSeek-V3.2-Summary 
  & 10.2 & 25.2 
  & 12.2 & \textbf{30.6}
  & 10.1 & 20.6 
  & 38.4 & 54.5 
  & 7.6  & 11.5 
  & 15.7 & 28.5 \\
Gemini-3-Pro-Summary  
  & 14.6 & 27.4 
  & 10.4 & 25.5 
  & \textbf{24.8} & \textbf{38.7} 
  & 35.2 & 49.4 
  & \underline{23.2} & 27.4
  & \underline{21.6} & \underline{33.7} \\
GPT-4o-Summary        
  & 3.6  & 18.7 
  & 5.6  & 19.7 
  & 10.3 & 24.1 
  & 20.0 & 37.2 
  & 17.6 & \underline{27.7}
  & 11.4 & 25.5 \\
LongCat-Flash-Summary 
  & \textbf{20.0} & \textbf{33.2}
  & \underline{12.8} & \underline{29.5}
  & \underline{18.5} & \underline{32.3} 
  & \textbf{52.8} & \textbf{66.1} 
  & \textbf{29.6} & \textbf{35.7}
  & \textbf{26.7} & \textbf{39.4} \\
\midrule
\multicolumn{13}{l}{\textit{Open-Source Large Model}} \\
\midrule
WebSailor-32B  & 22.0 & 34.6 & 17.4 & 37.3 & 14.0 & 29.5 & 30.4 & 45.1 & 17.6 & 25.6 & 20.3 & 34.4 \\
\midrule
\multicolumn{13}{l}{\textit{Specialized Agent (Qwen2.5-3B)}} \\
\midrule
Search-R1-3B
  & \underline{18.2} & \underline{22.4}
  & \underline{8.8}  & 17.6
  & 3.2              & 8.0
  & \underline{12.0} & 21.0
  & 0.8              & 2.8
  & \underline{8.6}  & 14.4 \\
WebSailor-3B
  & 5.0              & 12.8
  & 5.6              & \underline{17.8}
  & \underline{3.5}  & \underline{9.6}
  & \underline{12.0} & \underline{29.6}
  & \underline{2.8}  & \underline{3.8}
  & 5.8              & \underline{14.7} \\
\rowcolor{yellow!20}
ZipRL-3B (ours)
  & \textbf{27.8} & \textbf{37.5}
  & \textbf{25.2} & \textbf{40.3}
  & \textbf{13.4} & \textbf{23.5}
  & \textbf{44.0} & \textbf{56.9}
  & \textbf{22.5} & \textbf{24.8}
  & \textbf{26.6} & \textbf{36.6} \\
\midrule
\multicolumn{13}{l}{\textit{Specialized Agent (Qwen3-4B)}} \\
\midrule
NestBrowse-4B
  & 12.0             & 21.5
  & 14.6             & 28.5
  & 4.3              & 13.1
  & 15.2             & 24.9
  & 11.2             & 15.7
  & 11.5             & 20.7 \\
AgentFold-4B
  & \underline{16.0} & \underline{26.5}
  & \underline{19.4} & \underline{33.8}
  & \underline{14.3} & \underline{24.7}
  & \underline{44.0} & \underline{57.9}
  & \underline{15.6} & \underline{19.8}
  & \underline{21.9} & \underline{32.5} \\
\rowcolor{yellow!20}
ZipRL-4B (ours)
  & \textbf{31.4} & \textbf{40.4}
  & \textbf{23.0} & \textbf{39.0}
  & \textbf{14.6} & \textbf{26.2}
  & \textbf{47.2} & \textbf{60.9}
  & \textbf{24.0} & \textbf{26.9}
  & \textbf{28.0} & \textbf{38.7} \\
\midrule
\multicolumn{13}{l}{\textit{Specialized Agent (Qwen2.5-7B)}} \\
\midrule
Search-R1-7B
  & 14.6             & 19.0
  & 5.4              & 13.1
  & 5.1              & 9.5
  & 20.8             & 28.7
  & 0.4              & 3.4
  & 9.3              & 14.7 \\
WebSailor-7B
  & 7.6              & 14.9
  & 13.0             & 26.4
  & 7.2              & 14.1
  & 36.8             & 49.1
  & \underline{6.4}  & \underline{8.3}
  & 14.2             & 22.6 \\
ASearcher-7B
  & \textbf{32.6} & \textbf{42.9}
  & \underline{23.8} & \underline{39.5}
  & \underline{11.0} & \underline{21.6}
  & \underline{39.2} & \underline{51.5}
  & 6.0              & 8.2
  & \underline{22.5} & \underline{32.7} \\
\rowcolor{yellow!20}
ZipRL-7B (ours)
  & \underline{31.2} & \underline{41.4}
  & \textbf{25.6} & \textbf{43.0}
  & \textbf{15.8} & \textbf{26.0}
  & \textbf{48.0} & \textbf{61.6}
  & \textbf{19.5} & \textbf{21.5}
  & \textbf{28.0} & \textbf{38.7} \\
\midrule
\multicolumn{13}{l}{\textit{Specialized Agent (Qwen3-8B)}} \\
\midrule
NestBrowse-8B
  & 21.4             & 29.5
  & 19.2             & \underline{36.9}
  & 9.1              & 17.9
  & 33.6             & 47.4
  & 8.8              & 14.5
  & 18.4             & 29.2 \\
AgentFold-8B
  & \underline{21.6} & \underline{30.1}
  & \underline{20.8} & 34.9
  & \underline{12.3} & \underline{21.2}
  & \underline{42.4} & \underline{54.1}
  & \underline{15.6} & \underline{19.3}
  & \underline{22.5} & \underline{31.9} \\
\rowcolor{yellow!20}
ZipRL-8B (ours)
  & \textbf{36.8} & \textbf{46.7}
  & \textbf{25.8} & \textbf{42.4}
  & \textbf{18.1} & \textbf{29.3}
  & \textbf{49.8} & \textbf{64.7}
  & \textbf{20.8} & \textbf{23.0}
  & \textbf{30.3} & \textbf{41.2} \\
\midrule
\multicolumn{13}{l}{\textit{Specialized Agent (Qwen2.5-14B)}} \\
\midrule
Search-R1-14B
  & 11.8             & 16.6
  & 11.4             & 20.5
  & 7.2              & 13.5
  & 26.4             & 36.6
  & 1.6              & 4.8
  & 11.7             & 18.4 \\
ASearcher-14B
  & \textbf{36.6} & \textbf{47.6}
  & \textbf{24.4} & \textbf{41.6}
  & \underline{16.1} & \underline{27.7}
  & \underline{48.8} & \underline{60.0}
  & \underline{13.2} & \underline{17.9}
  & \underline{27.8} & \underline{39.0} \\
\rowcolor{yellow!20}
ZipRL-14B (ours)
  & \underline{30.8} & \underline{40.7}
  & \underline{23.2} & \underline{41.0}
  & \textbf{19.2} & \textbf{31.3}
  & \textbf{51.2} & \textbf{64.4}
  & \textbf{23.6} & \textbf{26.1}
  & \textbf{29.6} & \textbf{40.7} \\
\midrule
\multicolumn{13}{l}{\textit{Specialized Agent (Qwen3-14B)}} \\
\midrule
NestBrowse-14B
  & \underline{24.4} & \underline{35.2}
  & 15.2             & 30.3
  & \underline{13.8} & \underline{25.7}
  & 42.4             & \underline{56.9}
  & 7.6              & 11.7
  & 20.7             & 32.0 \\
AgentFold-14B
  & 22.2             & 32.9
  & \underline{21.8} & \underline{36.5}
  & 13.4             & 22.4
  & \underline{45.6} & 55.2
  & \underline{14.8} & \underline{16.9}
  & \underline{23.6} & \underline{32.8} \\
\rowcolor{yellow!20}
ZipRL-14B (ours)
  & \textbf{33.6} & \textbf{43.4}
  & \textbf{24.0} & \textbf{42.2}
  & \textbf{20.8} & \textbf{32.4}
  & \textbf{52.8} & \textbf{67.6}
  & \textbf{24.4} & \textbf{27.0}
  & \textbf{31.1} & \textbf{42.5} \\
\bottomrule
\end{tabular}
}
\end{table*}

\paragraph{Datasets.} We evaluate the agent on Web Browsing and Multi-hop QA. For Web Browsing, we use BrowseComp~\citep{wei2025browsecomp}, a dynamic web retrieval benchmark for long-context management. For Multi-hop QA, we use MusiQue~\citep{trivedi2022musique}, SQuAD~\citep{rajpurkar2016squad}, Frames~\citep{krishna2025fact}, and Bamboogle~\citep{press2023measuring}, which require aggregating evidence across retrieval steps. MusiQue, Frames, and Bamboogle are multi-hop datasets, whereas SQuAD is primarily single-hop. Following~\citet{jin2025search}, we evaluate SQuAD in an \emph{open-domain retrieval} setting without the source paragraph. ZipRL yields larger gains on harder multi-hop datasets, suggesting that adaptive compression is more beneficial for complex reasoning.

\paragraph{Baselines.} We compare our method against the following baselines:
\textbf{(1) ReAct}~\citep{yao2022react}, which uses large-scale closed-source models with context concatenation, setting the performance upper bound;
\textbf{(2) Summary-Only}, a training-free baseline using heuristic rules for context summarization;
\textbf{(3) Search Agent Methods} (Search-R1~\citep{jin2025search}, WebSailor~\citep{li2025websailor}, NestBrowse~\citep{li2025nested}, AgentFold~\citep{ye2025agentfold}, ASearcher~\citep{gao2025beyond}), which utilize dynamic context condensing or specialized agents for long-context reasoning.

\paragraph{Retrieval System.}
To ensure fair comparison, all models within each benchmark share an identical retrieval backend. For Multi-hop QA datasets (MusiQue, SQuAD, Frames, Bamboogle), we deploy E5-base-v2 (mean pooling, max query length 256) over the Wikipedia 2018 corpus with FAISS Flat Index (FP16). For BC-Plus, we use Qwen3-Embedding-8B (last-token pooling, max query length 8192) over the dedicated BC+ corpus with pre-computed local vector indices. Top-$k$, corpus, and index configuration are strictly identical for all compared methods on each dataset.

\paragraph{Experimental Details.} We use Qwen2.5-\{3B, 7B, 14B\} and Qwen3-\{4B, 8B, 14B\}~\citep{yang2025qwen3} as base models. All models are trained on an 8$\times$A100 (80GB) GPU cluster with at most 20 interaction turns, reward reshaping coefficient 0.2, learning rate $1\mathrm{e}{-6}$, and batch size 64. During evaluation, we use temperature 0.8 and top-$p$ 1.0, and report EM and F1~\citep{li2025search}. We adopt official baseline checkpoints, where ASearcher does not provide a 4B model and WebSailor provides 32B instead of 14B. Tool configurations, compression details, and additional experimental settings are provided in Appendix~\ref{appendix:tool_config} and Appendix~\ref{appendix:exp_details}.

\subsection{Main Results}
\paragraph{Performance Results.}
Table~\ref{tab:main_results} reports results on Multi-hop QA and Web Browsing benchmarks. Among 8B-scale agents, ZipRL-8B achieves 30.3\% average EM and 41.2\% F1, outperforming the much larger Qwen3-235B-ReAct baseline with $29\times$ fewer parameters and matching Gemini-3-Pro. Compared with similarly sized specialized agents, i.e., ASearcher-7B and AgentFold-8B, ZipRL-8B improves average EM by 7.8 points. This advantage is consistent across scales: ZipRL-4B and ZipRL-14B lead their parameter-matched groups, while ZipRL-32B further reaches 35.2\% average EM, improving over ZipRL-14B by 4.1 points (Appendix~\ref{appendix:32b_scaling}). ZipRL also substantially outperforms Summary-Only baselines across all datasets, indicating that its learned active compression policy preserves complex information dependencies beyond static heuristics. Moreover, on the same Qwen2.5 backbone series used by prior baselines, ZipRL-7B still yields a 5.5-point average EM gain over ASearcher-7B, further confirming that the improvements stem from algorithmic design rather than base-model differences (Appendix~\ref{appendix:qwen25_comparison}).

\begin{figure}[t]
    \centering
    \includegraphics[width=0.9\columnwidth]{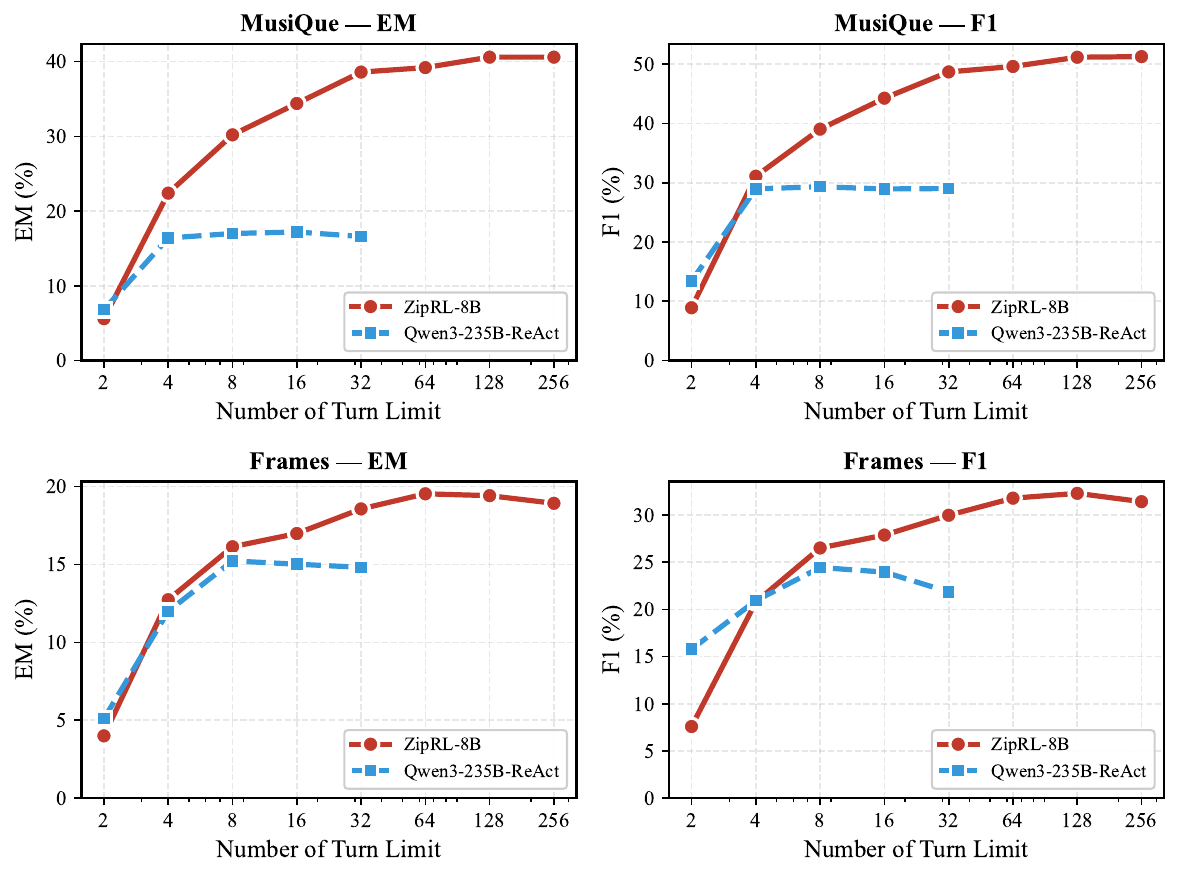}
    \caption{Performance variations of the MusiQue and Frames datasets under different turn constraints.}
    \label{fig:scalability}
\end{figure}

\begin{table*}[htbp]
\centering
\small
\caption{Ablation Study of ZipRL on the Qwen3-8B Model Across Five 
Datasets. BC-Plus denotes BrowseComp-plus. The \textbf{bold} and 
\underline{underline} indicate the best and second-best results, 
respectively.}
\label{tab:ablation_grouped}
\begin{tabular}{@{}lcccccccccccc@{}}
\toprule
\multirow{2}{*}{\textbf{Model Variant}} & \multicolumn{2}{c}{\textbf{MusiQue}} & \multicolumn{2}{c}{\textbf{SQuAD}} & \multicolumn{2}{c}{\textbf{Frames}} & \multicolumn{2}{c}{\textbf{Bamboogle}} & \multicolumn{2}{c}{\textbf{BC-Plus}} & \multicolumn{2}{c}{\textbf{Average}} \\
\cmidrule(lr){2-3} \cmidrule(lr){4-5} \cmidrule(lr){6-7} \cmidrule(lr){8-9} \cmidrule(lr){10-11} \cmidrule(lr){12-13}
 & EM & F1 & EM & F1 & EM & F1 & EM & F1 & EM & F1 & EM & F1 \\
\midrule
ZipRL (ours) & 36.8 & 46.7 & \textbf{25.8} & \textbf{42.4} & \textbf{18.1} & \textbf{29.3} & \textbf{49.8} & \underline{64.7} & \underline{20.8} & 23.0 & \textbf{30.3} & \textbf{41.2} \\
w/o RL & 29.0 & 39.7 & 21.0 & 36.4 & 16.0 & 26.4 & 45.6 & 59.8 & 20.4 & \underline{26.3} & 26.4 & 37.7 \\
w/o Level 2\&4 & 35.4 & 45.8 & 24.4 & 40.3 & \underline{17.7} & 28.2 & \underline{48.4} & \textbf{64.9} & 18.4 & 22.1 & 28.9 & 40.3 \\
w/o $Q_{\text{ratio}}$ & \underline{37.0} & \underline{47.1} & 23.6 & 38.7 & 17.6 & 27.8 & 47.0 & 61.4 & \textbf{21.6} & \textbf{26.5} & \underline{29.4} & 40.3 \\
w/o $Q_{\text{info}}$ & 32.4 & 41.4 & 20.8 & 37.2 & 15.3 & 27.1 & 47.2 & 58.7 & 18.8 & 23.2 & 26.9 & 37.5 \\
w/o $Q_{\text{sem}}$ & 32.4 & 42.5 & 23.0 & 38.5 & 16.9 & 28.3 & 46.4 & 62.0 & 20.0 & 24.1 & 27.7 & 39.1 \\
w/o $Q_{\text{level}}$ & \textbf{38.0} & \textbf{48.3} & \underline{25.0} & \underline{42.0} & 16.9 & \underline{28.8} & 46.4 & 59.9 & \underline{20.8} & 23.9 & \underline{29.4} & \underline{40.6} \\
\bottomrule
\end{tabular}
\end{table*}

\paragraph{Long-Horizon Scalability.} To assess robustness against context overload, we conduct extrapolation stress tests with extended interaction limits, as shown in Figure~\ref{fig:scalability}. We increase the maximum number of turns from 2 to 256 on MusiQue and Frames, where 256 turns serves as a stress-test boundary rather than a typical QA length. ZipRL-8B consistently outperforms the larger Qwen3-235B-ReAct baseline. While Qwen3-235B-ReAct saturates after 16 turns, ZipRL continues improving up to 256 turns, despite being trained with at most 20 turns. This $12.8\times$ horizon extrapolation suggests that ZipRL learns relevance-aware retention rather than overfitting to short interaction patterns, enabling robust long-horizon deployment under accumulated context noise.

\begin{figure}[t]
    \centering
    \begin{minipage}[b]{0.48\columnwidth}
        \centering
        \includegraphics[width=\textwidth]{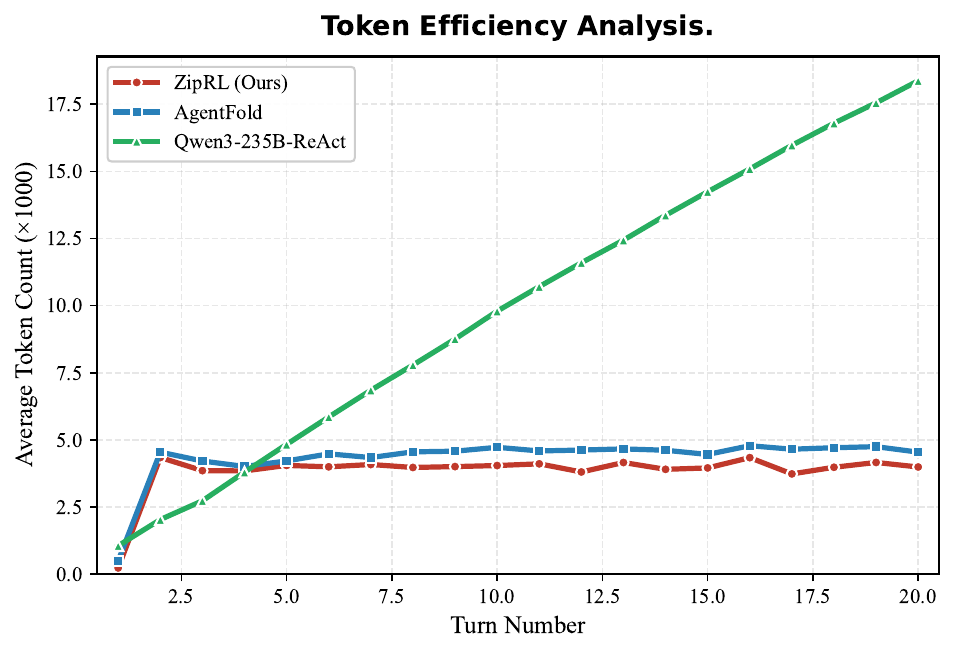}
    \end{minipage}
    \begin{minipage}[b]{0.44\columnwidth}
        \centering
        \includegraphics[width=\textwidth]{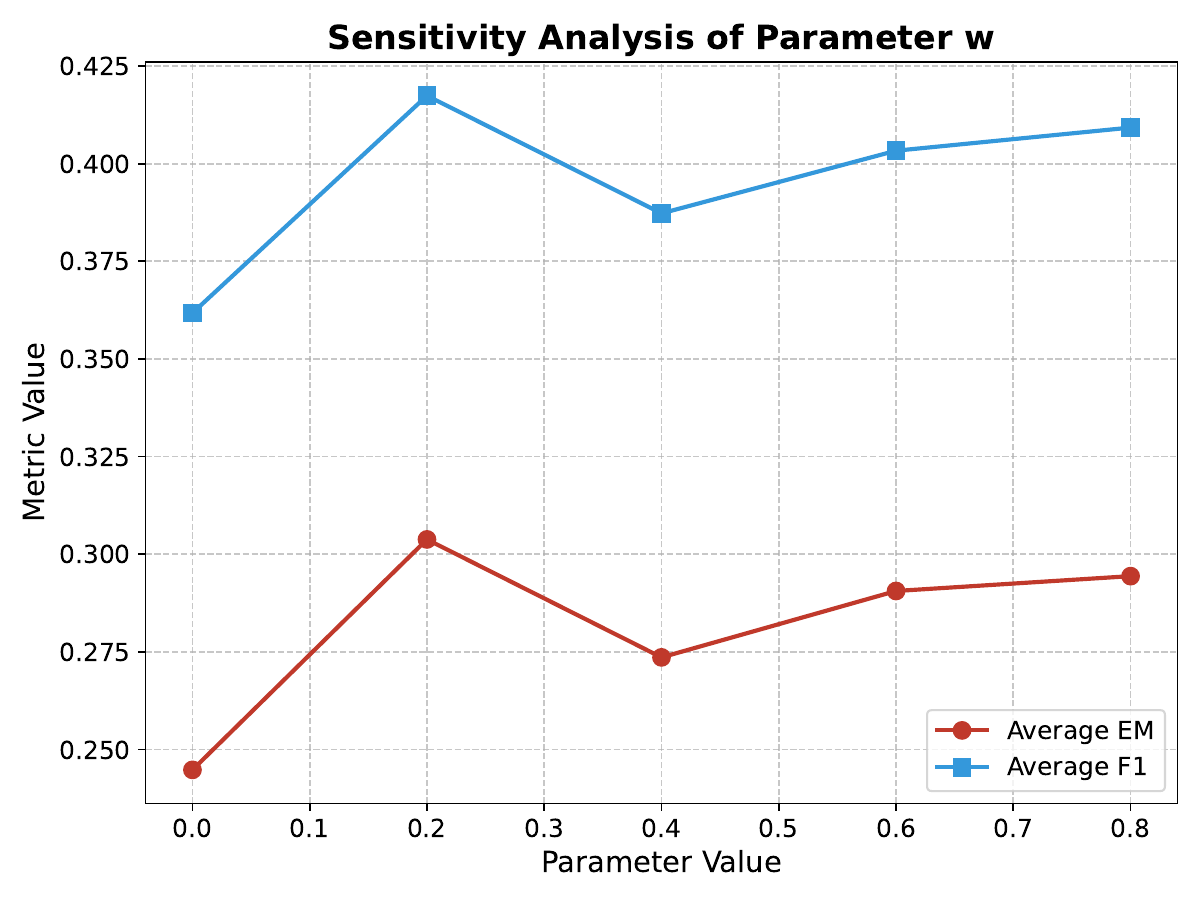}
    \end{minipage}
\caption{(Left) Average token count variation over training iterations. (Right) Sensitivity analysis of parameter $w$ across five datasets with ZipRL.}
    \label{fig:token_efficiency_and_w}
\end{figure}

\paragraph{Token Efficiency Analysis.} As shown in Figure~\ref{fig:token_efficiency_and_w} (left), ZipRL achieves strong token efficiency. Compared with ReAct-based large models, ZipRL-8B substantially reduces token usage while preserving performance. Its adaptive multi-granularity compression further outperforms AgentFold by better balancing information retention and computational cost. A detailed analysis of premature termination is provided in Appendix~\ref{appendix:trajectory_analysis}.

\begin{figure}[t]
    \centering
    \includegraphics[width=0.9\columnwidth]{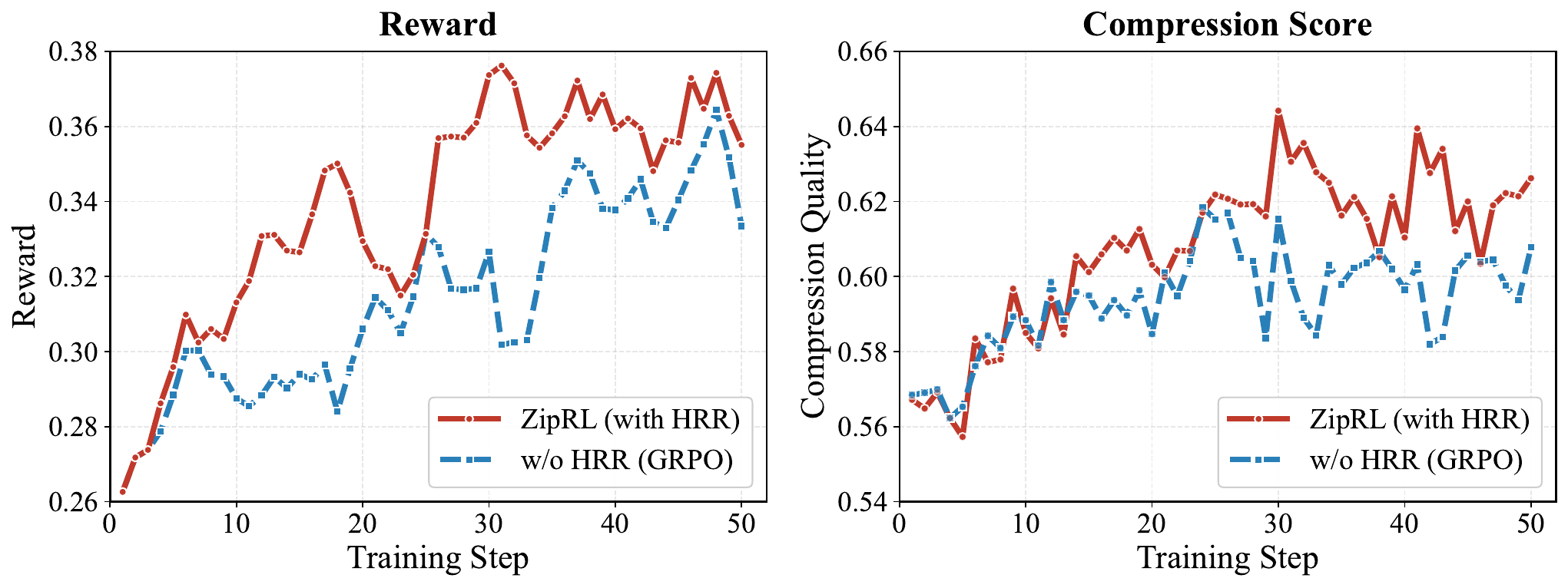}
    \caption{Training dynamics of ZipRL. (Left) Average reward and (Right) compression quality score.}
    \label{fig:reward_and_compression}
\end{figure}

\paragraph{Fine-Grained Behavior Statistics.} Fine-grained behavior statistics are provided in Appendix~\ref{appendix:agent_behavior_analysis} (Table~\ref{tab:behavior_stats}). ZipRL achieves a Finish Rate of 0.85 and adaptively increases retrieval intensity with question difficulty, while sustaining deep exploration on BC-Plus with 13.8 turns, avoiding under-retrieval and context-overload issues observed in baselines.

\section{Ablation Studies}
To quantify the contribution of each component, we conduct ablation studies with Qwen3-8B as the base model, as shown in Table~\ref{tab:ablation_grouped}. First, removing the RL phase and using only the cold-start model leads to a substantial performance drop, highlighting the importance of reinforcement learning through environment interaction. Second, reducing the compression granularity by removing Levels 2 and 4 consistently lowers average EM and F1, indicating the benefit of maintaining five compression levels. Finally, we ablate each dimension of the Compression Quality Score, i.e., $Q_{\text{ratio}}$, $Q_{\text{level}}$, $Q_{\text{info}}$, and $Q_{\text{sem}}$. Removing any dimension degrades performance, with $Q_{\text{info}}$ and $Q_{\text{sem}}$ causing the largest drops. Results demonstrate that ZipRL benefits from fine-grained compression control and quality-aware reward design, particularly through information retention and semantic completeness.

\section{Conclusion}

Despite recent progress in LLM-based long-text search, existing models still lack proactive awareness of document relevance and suffer from sparse rewards in long-horizon tasks. We introduce ZipRL to address these challenges through a multi-granularity mechanism that adaptively assigns compression levels according to query relevance. We further design a rule-based evaluation scheme and incorporate compression scores into Hindsight Response Replay to mitigate reward sparsity during multi-turn training. Theoretically, we analyze the benefits of both multi-granularity compression and HRR. Extensive experiments across multiple models of varying versions and parameter scales on five Web Browsing and Multi-hop QA datasets validate the effectiveness of our approach and show that ZipRL consistently outperforms strong baselines.

\section*{Limitations}

We note three limitations of ZipRL. (1) Language: $Q_{\text{info}}$ relies on English stopwords, potentially degrading in multilingual or specialized domains (e.g., legal, code). (2) Robustness: $Q_{\text{com}}$ ignores document credibility; EM drops heavily (85--99\%) under fully adversarial retrieval (Appendix~\ref{appendix:noise_robustness}). (3) Generalization: Relying on a single QA corpus for cold-start may limit adaptation to structured or code-heavy tasks.

\section*{Ethical Statements}

Our experiments rely strictly on publicly available datasets, involving no sensitive user data. Large language models were used solely for language editing during manuscript preparation. All research outcomes and intellectual content remain the exclusive work of the authors.



\bibliography{sample-base}

\appendix

\section{Cold Start Details}
\label{appendix:cold_start}

Before RL training, we implement a Cold Start phase to initialize the policy. We constructed a demonstration dataset, $\mathcal{D}_{\text{demo}}$, by sampling 3,000 queries from ASearcher-LRM-35k~\citep{gao2025beyond} and synthesizing trajectories via GPT-4o. To ensure high quality, we applied two filters: (1) \textbf{Correctness}: retaining only correct trajectories, and (2) \textbf{Format}: discarding outputs with structural violations. This yielded 1,155 valid trajectories, which were then decomposed into transition-level samples. The model is optimized using the standard Supervised Fine-Tuning (SFT) objective:
$$
\mathcal{L}_{\text{SFT}}(\theta) = -\mathbb{E}_{(q, o) \sim \mathcal{D}_{\text{demo}}} \left[ \sum_{t=1}^{|o|} \log \pi_{\theta}(o_t \mid q, o_{<t}) \right]
$$
where $\pi_{\theta}$ denotes the policy network, and $o = (o_1, o_2, \ldots, o_{|o|})$ is the output sequence. Each $o_t$ represents the observation or generated context token at step $t$ (including retrieved documents, reasoning traces, and tool outputs accumulated during the interaction). This phase equips the model with essential capabilities, adequately preparing it for subsequent RL optimization.

\section{Diagnosing the Performance Plateau at 256 Turns}
\label{appendix:plateau_diagnosis}

To understand what limits performance as interaction turns increase beyond $\sim$128, we conduct a targeted diagnostic analysis on MusiQue along three axes. \textbf{(1) Dataset ceiling:} An oracle experiment (providing all gold-standard evidence documents directly to the model without retrieval) achieves $51.3\%$ EM on MusiQue, substantially above ZipRL-8B's $36.8\%$ at 256 turns, so the dataset ceiling is not the primary bottleneck. \textbf{(2) Retriever saturation:} We track the \emph{novel-entity rate}: the fraction of retrieved documents at each turn that introduce at least one named entity not seen in any prior retrieved document. This rate drops from $0.81$ at turn 10 to $0.29$ at turn 128 and reaches $0.17$ at turn 256, indicating that the retriever returns largely redundant results in later turns. \textbf{(3) Compression capacity:} The context compression ratio (compressed tokens / raw retrieved tokens) remains stable at $\approx 0.41$--$0.44$ throughout all turn budgets, ruling out compression capacity as a limiting factor. Together, these results indicate that \textbf{retriever saturation} (i.e., the exhaustion of novel information in the retrieval corpus) is the dominant bottleneck at ultra-long horizons.

\section{Preliminary: Hindsight Experience Replay}
\label{appendix:her_preliminary}

Hindsight Experience Replay (HER)~\citep{andrychowicz2017hindsight} is a powerful technique designed to improve sample efficiency in sparse-reward RL tasks. In many real-world scenarios, agents receive meaningful feedback only upon achieving specific goals, making traditional RL inefficient due to the scarcity of reward signals. HER addresses this challenge by transforming unsuccessful episodes into useful training data through the concept of substitute goals. Formally, a trajectory under HER is represented as:
\begin{equation}
\tau = (s, a, r, s', g, g')
\end{equation}
where $ s $ denotes the current state, $ a $ the executed action, $ r $ the reward with respect to the original goal $ g $, $ s' $ the next state, and $ g' $ a substitute goal. The reward is recomputed with respect to $ g' $, denoted as $ r' $, allowing the agent to learn from outcomes that were originally considered failures. This relabeling converts sparse, binary rewards into denser signals, accelerating learning.

\paragraph{Relationship Between HER and Our Approach.}
While HER inspires our method philosophically (both convert sparse binary feedback into denser learning signals), the two differ substantially in mechanism (see Table~\ref{tab:her_vs_hrr} below). HER operates on goal-conditioned MDPs by relabeling the desired goal post-hoc and replaying stored transitions under the new goal, requiring an explicit replay buffer and a goal-conditioned reward function. In contrast, our proposed Hindsight Response Replay (HRR) operates within the GRPO framework by reshaping the trajectory-level advantage at turn granularity using a heuristic compression quality score. HRR requires neither a replay buffer nor goal substitution; instead, it treats the achieved compression quality as a reference point for credit assignment, redistributing the sparse final reward across turns proportional to each turn's compression quality relative to the trajectory mean. The shared philosophical principle is that \emph{information from non-optimal outcomes should not be discarded}: HER extracts value by asking ``what goal \emph{would} this trajectory have achieved?'', while HRR extracts value by asking ``which turns contributed more to the compression objective within this trajectory?''

\section{Detailed Comparison: HER vs.\ HRR}
\label{appendix:her_hrr_table}

Table~\ref{tab:her_vs_hrr} summarizes the key distinctions between Hindsight Experience Replay (HER) and our proposed Hindsight Response Replay (HRR).

\begin{table}[h]
\centering
\footnotesize
\setlength{\tabcolsep}{4pt}
\caption{Comparison between Hindsight Experience Replay (HER) and Hindsight Response Replay (HRR).}
\label{tab:her_vs_hrr}
\resizebox{\columnwidth}{!}{%
\begin{tabular}{@{}lp{2.5cm}p{3cm}@{}}
\toprule
\textbf{Dimension} & \textbf{HER} & \textbf{HRR (Ours)} \\
\midrule
Signal source        & Relabeled goal reward                  & Compression quality score $Q_{\text{com}}$ \\
Core mechanism       & Goal substitution + transition replay  & Advantage reshaping \\
Replay buffer        & Required                               & Not required \\
Goal conditioning    & Explicit (goal-conditioned policy)     & Implicit (trajectory-mean baseline) \\
Reward densification & Via counterfactual goal relabeling     & Via turn-level quality comparison \\
Action space         & Continuous control / discrete actions  & Token-level generation \\
Shared philosophy    & \multicolumn{2}{c}{Extract learning signal from non-optimal experiences} \\
\bottomrule
\end{tabular}%
}
\end{table}




\section{Proof of Theorem~\ref{theorem:adaptive_compression}}
\label{app:proof_adaptive_compression}

\begin{theorem}[Utility Advantage of Relevance-Aware Allocation]
Let $R\sim P$ be a relevance random variable supported on 
$\mathcal{R}\subseteq\mathbb{R}$. Let $\bar{\alpha}>0$ be the average 
resource budget, and let $\mathcal{A}\subseteq\mathbb{R}_{++}$ be an open 
interval with $\bar{\alpha}\in\mathcal{A}$. Consider a utility function 
$\Phi:\mathcal{R}\times\mathcal{A}\to\mathbb{R}$ such that 
$\Phi(\cdot,\alpha)$ is measurable for every $\alpha\in\mathcal{A}$, and 
$\Phi(r,\cdot)$ is differentiable and strictly concave on $\mathcal{A}$ for 
every $r\in\mathcal{R}$. Let the uniform allocation be $\alpha_{\rm uni}(r)\equiv\bar{\alpha}$, and let the relevance-aware adaptive allocation be $\alpha_{\rm ada}(r)=f(r)$, where $f:\mathcal{R}\to\mathcal{A}$ is measurable, non-decreasing, and satisfies
$$
\mathbb{E}[f(R)]=\bar{\alpha}, 
\qquad 
\mathbb{P}\!\left(f(R)\neq\bar{\alpha}\right)>0 .
$$
Define the allocated marginal utility
$$
G_f(r)
=
\partial_{\alpha}\Phi(r,\alpha)\big|_{\alpha=f(r)} .
$$
Assume that $G_f$ is measurable and non-decreasing, and that
$\Phi(R,f(R))$, $\Phi(R,\bar{\alpha})$, $G_f(R)$, and $G_f(R)f(R)$ are 
integrable. Then
$$
\mathbb{E}\!\left[\Phi(R,f(R))\right]
>
\mathbb{E}\!\left[\Phi(R,\bar{\alpha})\right].
$$
\end{theorem}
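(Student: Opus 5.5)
The plan is to combine the pointwise concavity (tangent-line) inequality with a Chebyshev/Harris-type covariance inequality for two non-decreasing functions of the same random variable. Fix $r\in\mathcal{R}$. Since $\Phi(r,\cdot)$ is differentiable and strictly concave on the open interval $\mathcal{A}$, it lies below its tangent line at $f(r)$:
\[
\Phi(r,\bar\alpha)\le \Phi(r,f(r))+G_f(r)\bigl(\bar\alpha-f(r)\bigr),
\]
with equality if and only if $f(r)=\bar\alpha$. Rearranging, $\Phi(r,f(r))-\Phi(r,\bar\alpha)\ge G_f(r)\bigl(f(r)-\bar\alpha\bigr)$, and the inequality is strict on the event $\{f(R)\neq\bar\alpha\}$.

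Next we take expectations. The integrability of $\Phi(R,f(R))$, $\Phi(R,\bar\alpha)$, $G_f(R)$ and $G_f(R)f(R)$ makes every term finite. The difference $D(R)=\Phi(R,f(R))-\Phi(R,\bar\alpha)-G_f(R)(f(R)-\bar\alpha)$ is nonnegative, and it is strictly positive on a set of positive probability by the assumption $\mathbb{P}(f(R)\neq\bar\alpha)>0$. Hence $\mathbb{E}[D(R)]>0$, which gives
\[
\mathbb{E}[\Phi(R,f(R))]-\mathbb{E}[\Phi(R,\bar\alpha)]>\mathbb{E}\bigl[G_f(R)(f(R)-\bar\alpha)\bigr].
\]
This strictness is the point to check carefully: a nonnegative integrable variable that is positive with positive probability has positive mean.

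It remains to show $\mathbb{E}[G_f(R)(f(R)-\bar\alpha)]\ge 0$. Because $\mathbb{E}[f(R)]=\bar\alpha$, this quantity equals $\mathrm{Cov}(G_f(R),f(R))$. Both $G_f$ and $f$ are non-decreasing functions of the scalar $R$, so the covariance is nonnegative. To prove this I would use the standard independent-copy trick. Let $R'$ be an independent copy of $R$. Monotonicity gives $(G_f(R)-G_f(R'))(f(R)-f(R'))\ge 0$ pointwise. Expanding the expectation yields $2\,\mathbb{E}[G_fF]-2\,\mathbb{E}[G_f]\mathbb{E}[f]\ge0$, where $F=f(R)$.

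The main technical obstacle is justifying this expansion with only the stated integrability. We need $G_f(R)f(R')$ to be integrable, which follows from independence together with the integrability of $G_f(R)$ and of $f(R)$. The latter holds because $\mathbb{E}[f(R)]=\bar\alpha$ is assumed finite and $f>0$. Combining the two displays gives the strict inequality $\mathbb{E}[\Phi(R,f(R))]>\mathbb{E}[\Phi(R,\bar\alpha)]$.
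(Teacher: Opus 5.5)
Your proposal is correct and follows the paper's proof essentially step for step. Both arguments use the tangent-line inequality with a nonnegative slack term that is strictly positive on $\{f(R)\neq\bar\alpha\}$, rewrite $\mathbb{E}[G_f(R)(f(R)-\bar\alpha)]$ as $\operatorname{Cov}(G_f(R),f(R))$ via the budget constraint, and show that covariance is nonnegative with the independent-copy trick. Your explicit check that the cross term $G_f(R)f(R')$ is integrable, using independence and the fact that $f(R)$ is positive with finite mean, fills in a detail the paper leaves implicit.
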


\begin{proof}
For each fixed relevance value $r$, define the utility difference between
adaptive and uniform allocation as
$$
\Delta(r)
=
\Phi(r,f(r))-\Phi(r,\bar{\alpha}).
$$
Since $\alpha\mapsto \Phi(r,\alpha)$ is differentiable and strictly concave on
$\mathcal{A}$, the first-order concavity inequality gives
$$
\Phi(r,\bar{\alpha})
\le
\Phi(r,f(r))
+
G_f(r)\big(\bar{\alpha}-f(r)\big),
$$
with strict inequality whenever $f(r)\neq \bar{\alpha}$. Therefore,
$$
\Delta(r)
\ge
G_f(r)\big(f(r)-\bar{\alpha}\big).
$$

To handle strictness rigorously, define the slack term
$$
S(r)
=
\Delta(r)
-
G_f(r)\big(f(r)-\bar{\alpha}\big).
$$
By strict concavity, $S(r)\ge 0$, and $S(r)>0$ whenever
$f(r)\neq\bar{\alpha}$. Since the adaptive allocation is non-trivial,
$$
\mathbb{P}\big(f(R)\neq \bar{\alpha}\big)>0.
$$
Thus, $S(R)$ is non-negative and strictly positive on an event with positive
probability. Under the stated integrability assumptions,
$$
\mathbb{E}[S(R)]>0.
$$
Taking expectation over $R$ gives
$$
\begin{aligned}
\mathbb{E}[\Delta(R)]
&=
\mathbb{E}\!\left[
G_f(R)\big(f(R)-\bar{\alpha}\big)
\right]
+
\mathbb{E}[S(R)]  \\
&>
\mathbb{E}\!\left[
G_f(R)\big(f(R)-\bar{\alpha}\big)
\right].
\end{aligned}
$$

It remains to show that the last expectation is non-negative. Let
$$
A=f(R),
\qquad
M=G_f(R).
$$
Using the budget constraint $\mathbb{E}[A]=\bar{\alpha}$, we have
$$
\begin{aligned}
\mathbb{E}\!\left[M(A-\bar{\alpha})\right]
&=
\mathbb{E}[MA]
-
\bar{\alpha}\mathbb{E}[M]  \\
&=
\mathbb{E}[MA]
-
\mathbb{E}[A]\mathbb{E}[M]  \\
&=
\operatorname{Cov}(M,A).
\end{aligned}
$$
Because both $f$ and $G_f$ are non-decreasing functions of $r$, this covariance
is non-negative. Let $R'$ be an independent copy of $R$. Then
$$
\begin{aligned}
2\operatorname{Cov}(M,A)
&=
\mathbb{E}\!\left[
\big(G_f(R)-G_f(R')\big)
\right. \\
&\qquad\left.
\cdot
\big(f(R)-f(R')\big)
\right].
\end{aligned}
$$
Since both $G_f$ and $f$ are non-decreasing, we have
$$
\big(G_f(R)-G_f(R')\big)
\big(f(R)-f(R')\big)
\ge 0
$$
almost surely. Therefore,
$$
\operatorname{Cov}(M,A)\ge 0.
$$
Consequently,
$$
\mathbb{E}\!\left[
G_f(R)\big(f(R)-\bar{\alpha}\big)
\right]
\ge 0.
$$
Combining the above inequalities yields
$$
\mathbb{E}[\Delta(R)]>0.
$$
Equivalently,
$$
\mathbb{E}\big[\Phi(R,f(R))\big]
>
\mathbb{E}\big[\Phi(R,\bar{\alpha})\big].
$$
This completes the proof.
\end{proof}

\section{Human Evaluation of Compression Quality}
\label{appendix:human_eval}

To validate that $Q_{\text{com}}$ correlates with human-perceived compression quality, we conducted a targeted human evaluation study. We randomly sampled 100 compression outputs from ZipRL-8B inference trajectories on the MusiQue and Frames test sets (50 per dataset, 20 per compression level). Three annotators with graduate-level NLP backgrounds rated each compressed output on two dimensions: \textbf{Fidelity} (1--5) and \textbf{Coherence} (1--5). Inter-annotator agreement was measured by Cohen's $\kappa$, yielding $\kappa = 0.73$ for Fidelity and $\kappa = 0.71$ for Coherence, indicating substantial agreement.

\begin{table}[h]
\centering
\caption{Spearman correlation between $Q_{\text{com}}$ sub-scores and human ratings ($n=100$ samples).}
\label{tab:human_eval}
\setlength{\tabcolsep}{8pt}
\begin{tabular}{@{}lcc@{}}
\toprule
\textbf{Metric} & \textbf{Fidelity $\rho$} & \textbf{Coherence $\rho$} \\
\midrule
$Q_{\text{info}}$             & 0.64          & 0.29 \\
$Q_{\text{sem}}$              & 0.27          & 0.73 \\
$Q_{\text{ratio}}$            & 0.43          & 0.46 \\
$Q_{\text{level}}$            & 0.39          & 0.54 \\
\midrule
$Q_{\text{com}}$ (composite) & \textbf{0.68} & \textbf{0.69} \\
\bottomrule
\end{tabular}
\end{table}

The weighted composite $Q_{\text{com}}$ achieves the highest correlation on both dimensions ($\rho = 0.68$ / $0.69$), outperforming all individual sub-scores and justifying the multi-dimensional design.

\section{Details of Compression Quality Scoring}
\label{appendix:score_details}

\subsection{Compression Ratio Score $Q_{\text{ratio}}$}

Let $l_y$ be the character length of the generated compressed text $y$, and $[\mathcal{L}_l^g, \mathcal{L}_h^g]$ be the target length interval for compression level $g$. We employ a piecewise function:
\begin{equation}
Q_{\text{ratio}} =
\begin{cases}
1.0, & \text{if } \mathcal{L}_l^g \leq l_y \leq \mathcal{L}_h^g \\
0.8 \cdot \frac{l_y}{\mathcal{L}_l^g}, & \text{if } l_y < \mathcal{L}_l^g \\
1.0 - 0.3 \cdot \frac{l_y - \mathcal{L}_h^g}{0.5 \mathcal{L}_h^g}, & \text{if } \mathcal{L}_h^g < l_y \leq 1.5 \mathcal{L}_h^g \\
0.3 \cdot \frac{1.5 \mathcal{L}_h^g}{l_y}, & \text{if } l_y > 1.5 \mathcal{L}_h^g
\end{cases}
\end{equation}

\begin{figure}[htbp]
    \centering
    \includegraphics[width=1\columnwidth]{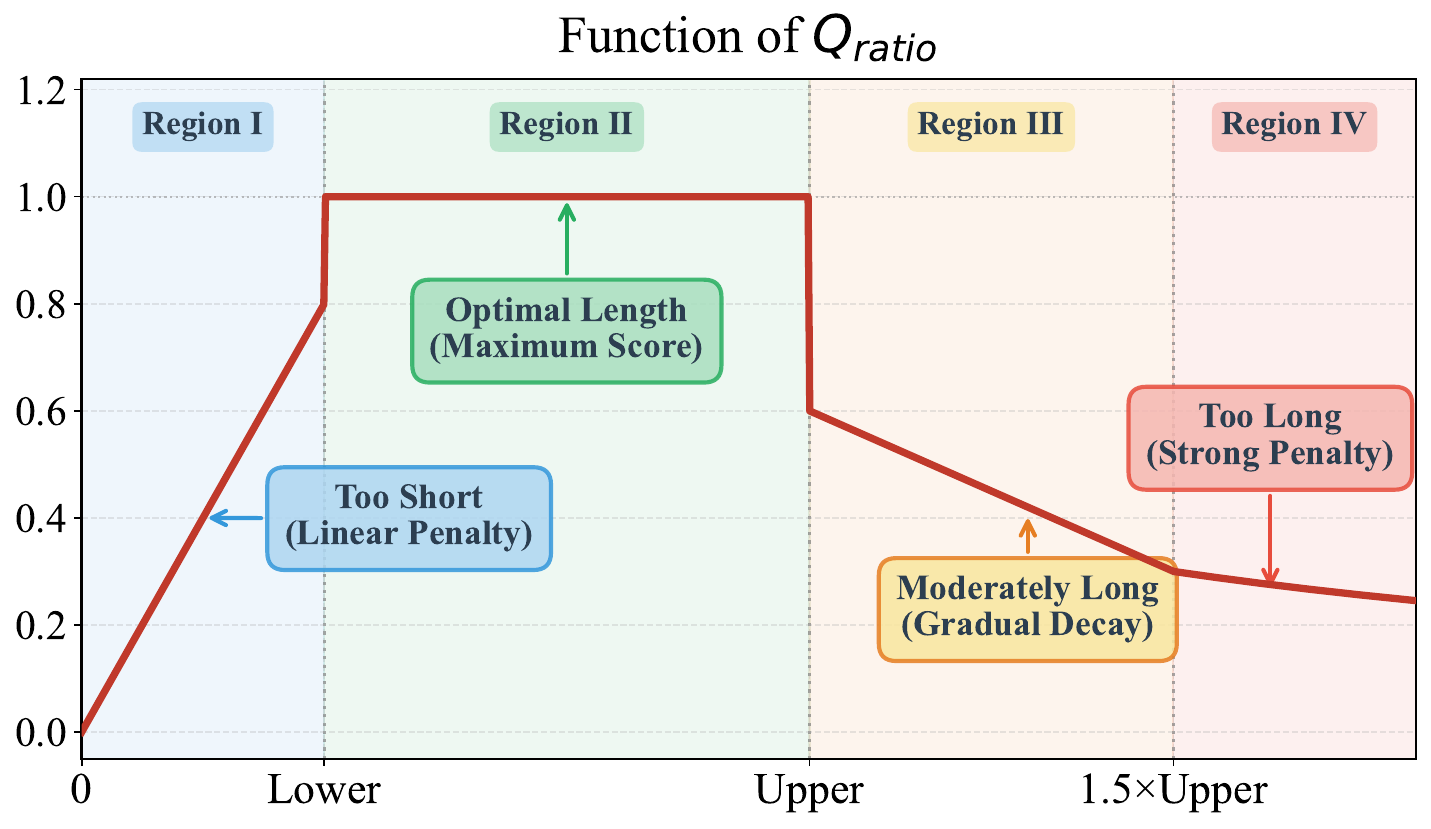}
    \caption{Compression ratio score versus text length.}
    \label{fig:r_ratio_plot}
\end{figure}

\subsection{Information Retention Score $Q_{\text{info}}$}

Let $K_q$ and $K_x$ denote the sets of keywords extracted from the query $q$ and the original text $x$, respectively. The retention score is:
\begin{equation}
    Q_{\text{info}} = 0.7 S_{\text{key}} + 0.3 S_{\text{gen}},
\end{equation}
where:
\begin{align}
    S_{\text{key}} &= \frac{|K_q \cap y|}{|K_q \cap x| + \epsilon}, \\
    S_{\text{gen}} &= 0.6 \cdot \frac{|y \cap K_x|}{|K_x| + \epsilon} + 0.4 \cdot \min\left(1.0, 2.0 \cdot \frac{|y|}{|x|}\right).
\end{align}

\subsection{Semantic Completeness Score $Q_{\text{sem}}$}

We evaluate linguistic validity using a multiplicative penalty model:
\begin{equation}
    Q_{\text{sem}} = 1.0 \cdot p_{\text{str}} \cdot p_{\text{len}} \cdot p_{\text{punc}},
\end{equation}
with the penalties defined as:
\begin{itemize}
    \item $p_{\text{str}} = 0.7$ if the text contains only a single sentence (unless level $g=1$), and $0.3$ if no valid sentence structure is detected; otherwise $1.0$.
    \item $p_{\text{len}} = 0.5$ if word count $< 20$, and $0.9$ if word count $> 1200$; otherwise $1.0$.
    \item $p_{\text{punc}} = 0.8$ if no sentence terminators are present; otherwise $1.0$.
\end{itemize}

\subsection{Level Strategy Consistency Score $Q_{\text{level}}$}

Given the target bounds $[\mathcal{W}_{min}^g, \mathcal{W}_{max}^g]$ and maximum sentence count $S_{max}^g$:
\begin{equation}
    Q_{\text{level}} = 0.7 S_{\text{word}} + 0.3 S_{\text{sent}},
\end{equation}
where:
\begin{equation}
    S_{\text{word}} =
    \begin{cases}
    1.0, & \text{if } \mathcal{W}_{min}^g \le W_y \le \mathcal{W}_{max}^g \\
    W_y / W_{min}^g, & \text{if } W_y < \mathcal{W}_{min}^g \\
    0.7, & \text{if } \mathcal{W}_{max}^g < W_y \le 1.5 \mathcal{W}_{max}^g \\
    0.3, & \text{otherwise}
    \end{cases}
\end{equation}
\begin{equation}
    S_{\text{sent}} = \min\left(1.0, \max\left(0.3, \frac{S_{max}^g}{S_y}\right)\right).
\end{equation}

\section{Analysis of Learned Compression Strategy}
\label{appendix:compression_level_distribution}

To understand the adaptive compression behavior learned by ZipRL, we analyzed the compression level selection patterns across 624 multi-turn interaction trajectories in our held-out test set.

\begin{table}[t]
\centering
\caption{Distribution of compression level selections by ZipRL-8B across 624 test trajectories.}
\label{tab:compression_level_dist}
\footnotesize
\setlength{\tabcolsep}{8pt}
\begin{tabular}{@{}lcc@{}}
\toprule
\textbf{Compression Level} & \textbf{Count} & \textbf{Percentage} \\
\midrule
Level 1 (Ultra-coarse) & 65 & 10.42\% \\
Level 2 (Coarse) & 10 & 1.60\% \\
Level 3 (Medium) & 339 & 54.33\% \\
Level 4 (Fine) & 51 & 8.17\% \\
Level 5 (Ultra-fine) & 159 & 25.48\% \\
\midrule
\textbf{Total} & \textbf{624} & \textbf{100.00\%} \\
\bottomrule
\end{tabular}
\end{table}

The predominance of Level 3 (54.33\%) shows the model learned that moderate compression suffices for most documents while conserving computational resources. The distribution aligns with Theorem~\ref{theorem:adaptive_compression}, validating that adaptive allocation outperforms uniform compression.

\section{Tool Configuration}
\label{appendix:tool_config}

To enable the agent to perform automated information retrieval tasks, we designed three tools.

\textbf{Search Tool}: Accepts textual queries and returns the most relevant document fragments, document IDs, and URLs.

\textbf{Open-Page Tool}: Accepts either a URL or a document ID and returns the complete webpage content, enabling detailed retrieval following search results.

\textbf{Finish Tool}: Invoked when the agent has obtained a definitive answer or can no longer proceed.

By coordinating these three tools, the agent executes the full ``Search, Browse, Analyze, and Output'' pipeline for automated information retrieval. ZipRL enforces a ``compress-on-every-turn'' mechanism: regardless of the specific tool invoked, the agent outputs a compressed reasoning state or observation summary before deciding on the next action.

\section{Experimental Details}
\label{appendix:exp_details}

\subsection{Datasets for Experiments}
\label{appendix:dataset_details}

Table~\ref{tab:datasets_statics} summarizes the statistics of all 
training and test datasets used in our experiments.

\begin{table}[t]
\centering
\small
\setlength{\tabcolsep}{4pt}
\renewcommand{\arraystretch}{1.1}
\begin{tabular}{llll}
\toprule
\textbf{Dataset} & \textbf{Description} & \textbf{Samples} & \textbf{Usage} \\
\midrule
ASearcher-LRM & Long-Horizon Search & 4,500 & Train \\
BC-Plus & Web Browsing & 830 & Test \\
MusiQue & Multi-hop QA & 2,459 & Test \\
SQuAD & Multi-hop QA & 2,694 & Test \\
Frames & Multi-hop QA & 824 & Test \\
Bamboogle & Multi-hop QA & 125 & Test \\
\bottomrule
\end{tabular}
\caption{Statistics of the training and test datasets used by all methods.}
\label{tab:datasets_statics}
\end{table}

\subsection{Selection of Compression Granularity}
\label{sec:granularity_selection}

To establish a principled framework for compression layer division, we analyzed 1,949 samples of successfully resolved real-world dialogues. Utilizing GPT-4, we generated compressed summaries for each dialogue turn and performed K-means clustering on the summary lengths of these samples, exploring cluster counts ($K$) ranging from 2 to 8. As illustrated in Table~\ref{tab:db_index_comparison}, the Davies-Bouldin Index reaches its minimum value of 0.4988 at $K=5$, indicating that a five-cluster configuration yields the optimal separation quality.

\begin{table}[h]
    \centering
    \footnotesize
    \setlength{\tabcolsep}{3pt}
    \caption{Davies-Bouldin Index Comparison Across Different K Values.}
    \label{tab:db_index_comparison}
    \begin{tabular}{@{}lccccccc@{}}
    \toprule
    \textbf{K} & \textbf{2} & \textbf{3} & \textbf{4} & \textbf{5}$^{\star}$ & \textbf{6} & \textbf{7} & \textbf{8} \\
    \midrule
    \textbf{DB Index} & 0.616 & 0.570 & 0.512 & \textbf{0.499} & 0.506 & 0.521 & 0.513 \\
    \textbf{$\Delta$\%} & --- & $-7.5$ & $-10.2$ & \textbf{$-2.5$} & $+1.4$ & $+3.0$ & $-1.5$ \\
    \bottomrule
    \end{tabular}
\end{table}

Based on these clustering results, we formally defined five compression levels:
\begin{itemize}
    \setlength\itemsep{0.1em}
    \item \textit{Level 1 (100--500 chars):} Ultra-coarse compression, retaining only core conclusions.
    \item \textit{Level 2 (500--1000 chars):} Coarse compression, retaining primary questions and key conclusions.
    \item \textit{Level 3 (1000--2000 chars):} Medium compression, retaining decision points and important context.
    \item \textit{Level 4 (2000--3000 chars):} Fine compression, retaining the complete logical chain and important details.
    \item \textit{Level 5 (3000--6000 chars):} Ultra-fine compression, preserving nearly all critical information.
\end{itemize}

\section{Scaling to 32B Parameters}
\label{appendix:32b_scaling}

To empirically prove that our adaptive multi-granularity compression method does not encounter a performance ceiling at the 14B scale, we extended our framework to a 32B parameter base model (ZipRL-32B). Evaluated across the benchmarks, ZipRL-32B achieves an average EM of 35.2\%, representing a +4.1\% absolute EM improvement over the ZipRL-14B model, demonstrating that ZipRL's performance continues to scale effectively with larger base models.

\section{Same-Base-Model Comparison (Qwen2.5)}
\label{appendix:qwen25_comparison}

To eliminate potential confounds arising from base model generation differences
(e.g., ZipRL-8B uses Qwen3-8B whereas ASearcher-7B uses Qwen2.5-7B), we conduct
strictly matched-scale controlled experiments by instantiating ZipRL on the 
identical Qwen2.5 series (3B/7B/14B) used by the baselines. As shown in 
Table~\ref{tab:scale_matched}, ZipRL consistently outperforms same-series 
baselines at all scales, with a $+5.5$ average EM gain at the 7B scale, firmly 
attributing the observed improvements to our algorithmic contributions.

Table~\ref{tab:scale_matched} presents the strictly matched-scale comparison where ZipRL is trained on the same Qwen2.5 base model series as the baselines, confirming that performance gains stem from our algorithmic contributions rather than generational model differences.

\begin{table}[h]
\centering
\footnotesize
\setlength{\tabcolsep}{3pt}
\caption{Strictly matched-scale comparison: ZipRL (Qwen2.5) vs.\ same-series baselines. $\Delta$EM: absolute improvement over the best same-scale specialized baseline.}
\label{tab:scale_matched}
\begin{tabular}{@{}llccc@{}}
\toprule
\textbf{Scale} & \textbf{Model} & \textbf{Base} & \textbf{Avg EM} & \textbf{$\Delta$EM} \\
\midrule
\multirow{2}{*}{3B}
 & Search-R1-3B     & Qwen2.5-3B  & 8.6  & --- \\
 & ZipRL-3B         & Qwen2.5-3B  & 26.6 & $+18.0$ \\
\midrule
\multirow{2}{*}{7B}
 & ASearcher-7B     & Qwen2.5-7B  & 22.5 & --- \\
 & ZipRL-7B         & Qwen2.5-7B  & 28.0 & $+5.5$ \\
\midrule
\multirow{2}{*}{14B}
 & ASearcher-14B    & Qwen2.5-14B & 27.8 & --- \\
 & ZipRL-14B        & Qwen2.5-14B & 29.6 & $+1.8$ \\
\bottomrule
\end{tabular}
\end{table}

\section{Comparison with Memory-Based Methods}
\label{appendix:memagent_comparison}

MemAgent~\citep{yu2025memagent} and MEM1~\citep{zhou2025mem1} are 
designed for processing pre-given long documents via fixed-length 
memory overwrite or consolidation. To enable fair comparison in our 
multi-turn retrieval setting, we adapt both methods by merging 
retrieved documents and splitting them into 500-word chunks as tool 
observations, injected into their respective memory mechanisms. 
As shown in Table~\ref{tab:memagent_comparison}, ZipRL consistently 
outperforms both adapted baselines across all scales and datasets.

\begin{table*}[h]
\centering
\caption{Comparison with adapted memory-based methods on multi-turn retrieval QA (4 datasets, BC-Plus excluded).}
\label{tab:memagent_comparison}
\setlength{\tabcolsep}{4pt}
\begin{tabular}{@{}lcccccc@{}}
\toprule
\textbf{Scale} & \textbf{Model} & \textbf{Bamboogle} & \textbf{MusiQue} & \textbf{SQuAD} & \textbf{Frames} & \textbf{Avg (EM/F1)} \\
\midrule
\multirow{3}{*}{8B}
 & MemAgent-8B & 35.2/48.9 & 17.8/28.4 & 14.2/26.1 & 13.1/25.2 & 20.1/32.2 \\
 & MEM1-8B     & 38.4/52.4 & 18.8/28.0 & 15.2/28.3 & 13.6/25.8 & 21.5/33.6 \\
 & ZipRL-8B    & \textbf{49.8/64.7} & \textbf{36.8/46.7} & \textbf{25.8/42.4} & \textbf{18.1/29.3} & \textbf{32.6/45.8} \\
\midrule
\multirow{3}{*}{14B}
 & MemAgent-14B & 36.8/50.4 & 16.4/27.2 & 13.4/25.9 & 12.6/24.7 & 19.8/32.1 \\
 & MEM1-14B     & 44.0/60.7 & 25.8/34.0 & 18.2/33.8 & 14.9/27.3 & 25.7/38.9 \\
 & ZipRL-14B    & \textbf{52.8/67.6} & \textbf{33.6/43.4} & \textbf{24.0/42.2} & \textbf{20.8/32.4} & \textbf{32.8/46.4} \\
\bottomrule
\end{tabular}
\end{table*}

\section{Hyperparameter and Sensitivity Analysis}
\label{appendix:hyperparameter_analysis}

\subsection{Hyperparameter Analysis}

To analyze the sensitivity of the hyperparameter $w$ in \cref{equ:advantage_adjust}, we evaluate ZipRL with the Qwen3-8B base model at different values of $w$, as shown in Figure~\ref{fig:token_efficiency_and_w} (right). When $ w = 0 $, the model reverts to standard GRPO, lacking turn-level dense advantage, and performs the worst, validating the effectiveness of HRR. For $ w $ values between 0.2 and 0.8, ZipRL shows consistent strong performance, with the best results at $ w = 0.2 $, which we adopt as the default.

\begin{figure}[h]
    \centering
    \begin{minipage}{0.48\columnwidth}
        \centering
        \includegraphics[width=\textwidth]{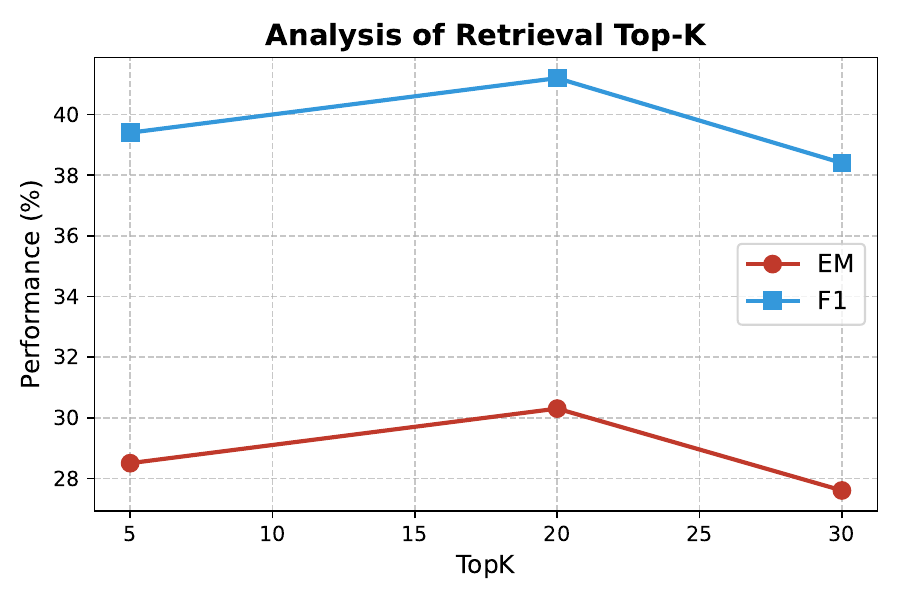}
    \end{minipage}
    \begin{minipage}{0.48\columnwidth}
        \centering
        \includegraphics[width=\textwidth]{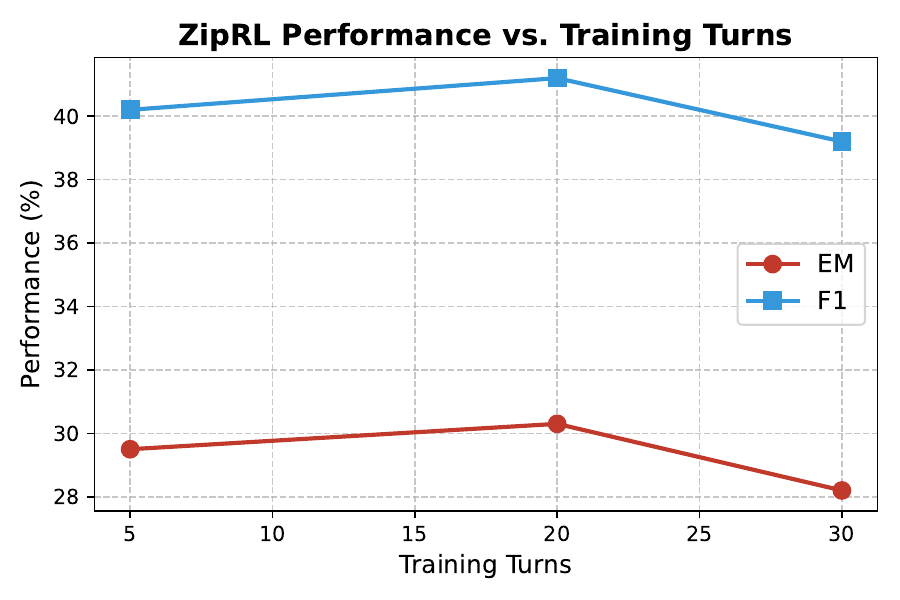}
    \end{minipage}
    \caption{Impact of retrieval top-$k$ (left) and training turns (right) on average EM and F1 scores across five datasets.}
    \label{fig:topk_and_turn_analysis}
\end{figure}

\subsection{Retrieval Top-K Analysis}

To assess the impact of the number of retrieved documents on ZipRL's performance, we evaluate the model at top-k values of 5, 20, and 30. As shown in Figure~\ref{fig:topk_and_turn_analysis} (left), the best performance occurs when top-k is set to 20, which we adopt as the default parameter.

\subsection{Training Turns Analysis}

To investigate the impact of the maximum number of interaction turns on ZipRL's performance, we evaluate the model at maximum training turns of 5, 20, and 30. As shown in Figure~\ref{fig:topk_and_turn_analysis} (right), the model achieves the best performance when the maximum number of turns is set to 20, which we select as the default parameter.

\section{Agent Behavior Analysis}
\label{appendix:agent_behavior_analysis}

\begin{table}[h]
\centering
\footnotesize
\setlength{\tabcolsep}{4pt}
\caption{Fine-grained agent behavior statistics (averaged across MusiQue, SQuAD, Frames, Bamboogle).}
\label{tab:behavior_stats}
\begin{tabular}{@{}lcccc@{}}
\toprule
\textbf{Model} & \textbf{Avg} & \textbf{Avg} & \textbf{Avg} & \textbf{Finish} \\
               & \textbf{Turns} & \textbf{Search/q} & \textbf{Open/q} & \textbf{Rate} \\
\midrule
ZipRL (Ours)  & 7.07 & 5.83 & 0.19 & 0.85 \\
AgentFold     & 6.23 & 4.94 & 0.54 & 0.58 \\
ASearcher     & 4.02 & 3.02 & 0.00 & 0.98 \\
NestBrowse-8B & 3.51 & 1.35 & 0.00 & 0.83 \\
Search-R1-7B  & 8.90 & 8.09 & 0.07 & 0.20 \\
\bottomrule
\end{tabular}
\end{table}

We highlight four key observations. \textbf{(1) Adaptive difficulty-aware search depth:} ZipRL dynamically adjusts retrieval intensity based on question difficulty: it issues only 2.5 searches on Bamboogle but scales to 8.5 on Frames ($3.4\times$ increase), an emergent behavior arising from outcome-driven RL training. \textbf{(2) Excessive retrieval without convergence (Search-R1):} Search-R1-7B issues $\sim$9.6 searches on Bamboogle ($3.8\times$ more than ZipRL) yet achieves significantly lower accuracy, with a Finish Rate of only 0.20, indicating frequent context budget exhaustion without conclusive answers. \textbf{(3) Under-retrieval is not efficiency:} ReAct and NestBrowse models issue very few searches (0.87--1.26 per query), but this reflects under-retrieval rather than genuine efficiency; these models underperform substantially on multi-hop datasets requiring evidence chaining. \textbf{(4) Reliable termination:} ZipRL achieves a Finish Rate of 0.85, substantially outperforming AgentFold (0.58) and Search-R1-7B (0.20), demonstrating that ZipRL reliably commits to answers after sufficient evidence collection.

\begin{table}[h]
\centering
\footnotesize
\setlength{\tabcolsep}{2.5pt}
\caption{Average actual interaction turns and performance on the BC-Plus deep research benchmark.}
\label{tab:bc_plus_stats}
\begin{tabular}{@{}lccc@{}}
\toprule
\textbf{Model} & \textbf{Avg Turns} & \textbf{Average EM} & \textbf{Average F1} \\
\midrule
ZipRL-8B (Ours)     & 13.8 & 0.3233 & 0.4082 \\
GPT-4o-ReAct        & 6.0  & 0.2520 & 0.3161 \\
DeepSeek-v3.2-ReAct & 3.1  & 0.0880 & 0.1329 \\
Qwen3-235B-ReAct    & 2.1  & 0.2067 & 0.2355 \\
NestBrowse-8B       & 1.3  & 0.0880 & 0.1453 \\
\bottomrule
\end{tabular}
\end{table}

To demonstrate deep interaction capabilities on the BrowseComp-Plus (BC-Plus) benchmark, which requires complex deep research, ZipRL maintains effective exploration for an average of 13.8 turns, while baselines like Qwen3-235B-ReAct and Search-R1 terminate prematurely (2.1 to 6.0 turns) due to under-retrieval or context overload (Table~\ref{tab:bc_plus_stats}).

\textbf{Reward Evolution.} As shown in Figure~\ref{fig:reward_and_compression} (left), ZipRL outperforms the baseline in both convergence speed and final reward. GRPO exhibits instability due to sparse supervision, while HRR mitigates the credit assignment challenge in long-horizon tasks. By introducing dense intermediate feedback, HRR enables stable optimization, allowing the policy to reach higher reward levels.

\textbf{Compression Quality.} As illustrated in Figure~\ref{fig:reward_and_compression} (right), ZipRL exhibits a consistent upward trajectory with reduced variance, improving from 0.57 to 0.64.

\paragraph{Multi-Run Stability.}
To empirically validate the predicted variance reduction effect, we conduct 3 independent training runs for both standard GRPO ($w=0$) and ZipRL with HRR. As shown in Table~\ref{tab:stability}, HRR consistently achieves lower cross-run standard deviation ($\sigma = 0.0099$--$0.0124$) compared to GRPO ($\sigma = 0.0351$--$0.0638$) across all training phases.

\begin{table}[h]
\centering
\setlength{\tabcolsep}{3.5pt}
\caption{Cross-run reward statistics (mean $\pm$ std) over 3 independent runs.}
\label{tab:stability}
\begin{tabular}{@{}lcc@{}}
\toprule
\textbf{Training Steps} & \textbf{GRPO ($w=0$)} & \textbf{ZipRL (HRR)} \\
\midrule
Steps 21--25 & $0.5926 \pm 0.0351$ & $0.6171 \pm 0.0111$ \\
Steps 26--30 & $0.6028 \pm 0.0423$ & $0.6263 \pm 0.0113$ \\
Steps 31--35 & $0.6197 \pm 0.0512$ & $0.6379 \pm 0.0124$ \\
Steps 36--40 & $0.6238 \pm 0.0638$ & $0.6439 \pm 0.0099$ \\
\bottomrule
\end{tabular}
\end{table}

\section{Trajectory Distribution Analysis}
\label{appendix:trajectory_analysis}

\begin{figure}[t]
    \centering
    \includegraphics[width=0.9\columnwidth]{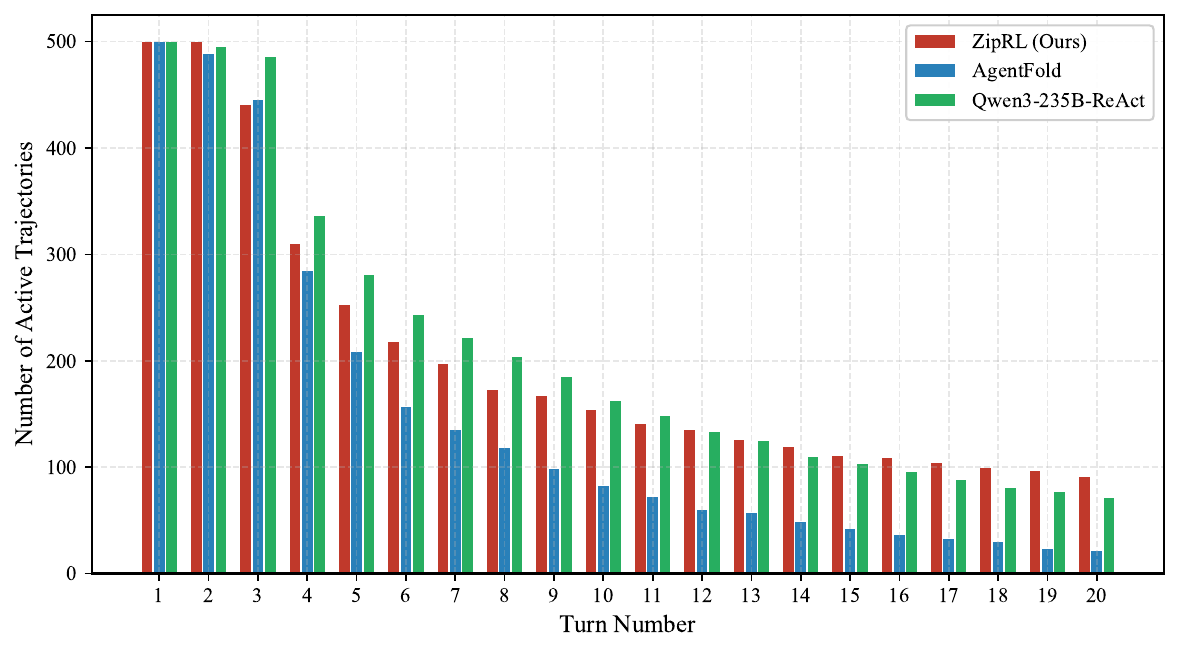}
    \caption{Distribution of active trajectories across different interaction turns on 500 difficulty queries from the MusiQue dataset. Sharp drops in baseline models between turns 5--10 indicate premature termination, while ZipRL maintains consistent reasoning depth up to 20 turns.}
    \label{fig:trajectory}
\end{figure}

As shown in Figure~\ref{fig:trajectory}, the baseline models exhibit sharp trajectory interruptions between turns 5 and 10, but the failure mechanisms differ. \textbf{Context Overload (Qwen3-235B-ReAct):} Despite its vast parameter scale, Qwen3-235B-ReAct achieves an EM of only $18.2\%$ under a full history retention strategy. \textbf{Cognitive Dead-ends (AgentFold):} AgentFold's fixed compression strategy lacks dynamic perception of task relevance, and aggressive compression often prunes ``bridge entities'' essential for multi-hop reasoning. In contrast, ZipRL demonstrates significant reasoning maintenance in deep interaction intervals (10--20 turns).

\section{Pseudocode of ZipRL}
\label{appendix:pseudocode}

Algorithm~\ref{alg:ziprl_hrr} presents the complete training procedure of ZipRL, covering both the cold-start SFT phase and the HRR-based RL optimization phase.

\begin{algorithm*}[t]
\caption{ZipRL: Adaptive Multi-Turn Context Compression with Hindsight Response Replay}
\label{alg:ziprl_hrr}
\begin{algorithmic}[1]
\STATE \textbf{Input:} Training Query Set $\mathcal{Q}$, SFT Model $\pi_{\text{sft}}$, Reference Model $\pi_{\text{ref}}$, Group Size $G$, Reshaping Weight $w$.
\STATE \textbf{Output:} Optimized Policy $\pi_{\theta}$.
\STATE \textbf{Stage 1: Supervised Fine-Tuning (SFT)}
\STATE Initialize $\theta \leftarrow \text{Train}(\pi_{\text{init}}, \mathcal{D}_{\text{demo}})$
\STATE \textbf{Stage 2: Reinforcement Learning with HRR}
\WHILE{not converged}
    \STATE Sample a batch of queries $q \sim \mathcal{Q}$
    \FOR{each query $q$}
        \STATE Sample $G$ trajectories $\mathcal{O} = \{o^{(1)}, \dots, o^{(G)}\}$ using $\pi_{\theta}(\cdot|q)$
        \STATE Obtain sparse outcome rewards $\mathcal{R}_{\text{final}} = \{R_{\text{final}}^{(1)}, \dots, R_{\text{final}}^{(G)}\}$
    \ENDFOR
    \FOR{each trajectory $o^{(i)}$ in $\mathcal{O}$}
        \FOR{each turn $j = 1$ to $T$}
            \STATE Calculate \textbf{Turn Quality}: $Q_{\text{com}}^{(i,j)} \leftarrow \text{Score}(o^{(i,j)})$
        \ENDFOR
        \STATE Calculate \textbf{Trajectory Baseline}: $\bar{Q}_{\text{com}}^{(i)} \leftarrow \frac{1}{T}\sum_{j} Q_{\text{com}}^{(i,j)}$
    \ENDFOR
    \STATE Compute group stats: $\mu_{R} \leftarrow \text{mean}(\mathcal{R}_{\text{final}}), \sigma_{R} \leftarrow \text{std}(\mathcal{R}_{\text{final}})$
    \FOR{each trajectory $i$ and turn $j$}
        \STATE $A_{\text{GRPO}}^{(i)} \leftarrow (R_{\text{final}}^{(i)} - \mu_{R}) / (\sigma_{R} + \varepsilon)$
        \STATE $A_{\text{HRR}}^{(i,j)} \leftarrow A_{\text{GRPO}}^{(i)} + w \cdot (Q_{\text{com}}^{(i,j)} - \bar{Q}_{\text{com}}^{(i)})$
    \ENDFOR
    \STATE Update $\theta$ by maximizing $\mathcal{L}(\theta)$
\ENDWHILE
\end{algorithmic}
\end{algorithm*}

\section{Robustness Under Noisy Retrieval}
\label{appendix:noise_robustness}

We evaluate whether ZipRL adaptively adjusts its compression strategy when retrieval quality degrades. Using ZipRL-8B, we inject noise into the retrieved document pool at five levels: Clean (0\%), Low (25\%), Mid (50\%), High (75\%), and Adversarial (100\% adversarial documents).

\paragraph{Robustness to Random Noise (0--75\%).}
Performance degrades gracefully under random noise. Across four datasets, the EM drop from Clean to High (75\%) noise is modest: $-9.8\%$ on Bamboogle, $-4.2\%$ on MusiQue, $-11.2\%$ on SQuAD, and $-15.3\%$ on Frames. ZipRL retains approximately 85--90\% of clean performance at 75\% noise.

\paragraph{Adversarial Retrieval Boundary.}
Under fully adversarial conditions (100\% adversarial documents), performance degrades substantially across all datasets (e.g., MusiQue: $0.330 \to 0.002$), consistent with the shared assumption of retrieval trustworthiness in RAG-based systems. Notably, ZipRL's average retrieval turns increase from $\sim$8.6 to $\sim$35.9 (Table~\ref{tab:noise_robustness}), suggesting emergent contradiction-resolution behavior. Extending the compression scoring framework with retrieval credibility estimation remains future work.

\begin{table}[h]
\centering
\footnotesize
\setlength{\tabcolsep}{2.5pt}
\caption{ZipRL-8B performance and average retrieval turns under varying noise levels (EM / Avg Turns).}
\label{tab:noise_robustness}
\begin{tabular}{@{}lcccc@{}}
\toprule
\textbf{Noise Level} & \textbf{Bamboogle} & \textbf{MusiQue} & \textbf{SQuAD} & \textbf{Frames} \\
 & EM / Turns & EM / Turns & EM / Turns & EM / Turns \\
\midrule
Clean (0\%)       & 0.488 / 8.6  & 0.330 / 15.4 & 0.232 / 6.4  & 0.163 / 20.7 \\
Low (25\%)        & 0.440 / 8.1  & 0.324 / 15.5 & 0.256 / 6.5  & 0.160 / 21.0 \\
Mid (50\%)        & 0.440 / 8.4  & 0.312 / 15.3 & 0.218 / 8.4  & 0.160 / 22.4 \\
High (75\%)       & 0.440 / 9.8  & 0.316 / 17.0 & 0.206 / 9.2  & 0.138 / 23.6 \\
Adversarial       & 0.072 / 35.9 & 0.002 / 36.7 & 0.012 / 31.6 & 0.018 / 37.7 \\
\bottomrule
\end{tabular}
\end{table}

\section{Comparison with Adaptive RAG Compression Methods}
\label{appendix:rag_compression_comparison}

\begin{table}[h]
\centering
\footnotesize
\setlength{\tabcolsep}{1pt}
\caption{Taxonomy of context compression paradigms. \cmark: supported; \xmark: not supported.}
\label{tab:method_taxonomy}
\begin{tabular}{@{}lcccc@{}}
\toprule
\textbf{Method} & \textbf{Multi-} & \textbf{Per-doc} & \textbf{RL-} & \textbf{Joint} \\
                & \textbf{turn}   & \textbf{Adaptive} & \textbf{optimized} & \textbf{Retrieval} \\
\midrule
LLMLingua / RECOMP    & \xmark & \xmark & \xmark & \xmark \\
ACC-RAG / SARA        & \xmark & \cmark & \xmark & \xmark \\
AttnComp              & \xmark & \cmark & \xmark & \xmark \\
MemAgent / MEM1       & \cmark & \xmark & \xmark & \xmark \\
\textbf{ZipRL (ours)} & \cmark & \cmark & \cmark & \cmark \\
\bottomrule
\end{tabular}
\end{table}

Recent adaptive RAG compression methods such as ACC-RAG~\citep{guo2025enhancing}, AttnComp~\citep{luo2025attncomp}, and SARA~\citep{jin2025sara} pursue variable-rate context compression in single-turn settings, with no mechanism to issue new retrieval actions or receive optimization signal from downstream task outcomes via RL. ZipRL addresses a fundamentally different problem: \emph{online, multi-turn agentic search}, where compression and agentic behavior are co-optimized through sparse end-task reward.

\section{Robustness of Compression Quality Score Against Gaming Behaviors}
\label{appendix:gaming_robustness}

\textbf{1. Mutual Structural Constraints.} While generating redundant query keywords might marginally increase $Q_{\mathrm{info}}$, this behavior is strictly penalized by $Q_{\mathrm{ratio}}$ (length overflow) and $Q_{\mathrm{sem}}$ (destroyed grammatical coherence). Since the final $Q_{\mathrm{com}}$ is a weighted sum, the penalty in structural and semantic dimensions heavily outweighs the illicit gain in $Q_{\mathrm{info}}$.

\textbf{2. Mathematical Bounding.} The keyword coverage term $S_{\mathrm{key}} = |K_q \cap y| / (|K_q \cap x| + \epsilon)$ naturally caps at 1.0, preventing unbounded reward through keyword repetition.

\textbf{3. Anchoring by Final Task Reward.} HRR functions as an advantage reshaping technique rather than replacing the environment reward. The final advantage remains fundamentally anchored by the GRPO advantage, which is strictly determined by the exact match (EM/F1) of the final answer. This is empirically confirmed by the monotonically increasing Pearson correlation between $Q_{\mathrm{com}}$ and task reward ($r: 0.25 \to 0.51$).

\section{Prompts}
\label{appendix:prompt_details}

We used the prompts shown in Figure~\ref{fig:prompt1} and Figure~\ref{fig:prompt2}.

\begin{figure*}[ht]
    \centering
    \includegraphics[width=\linewidth]{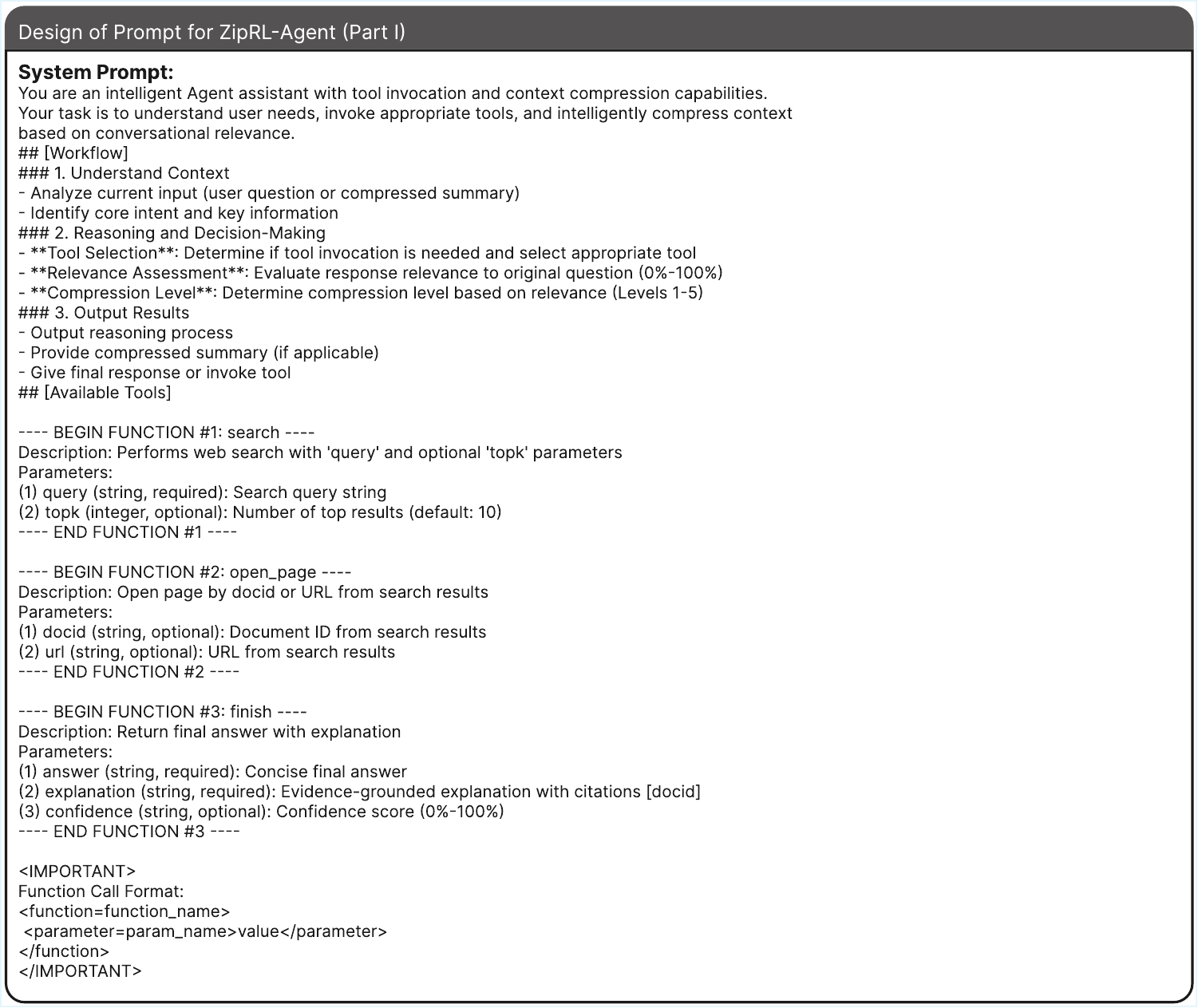}
    \caption{Prompt Design for ZipRL, Part 1.}
    \label{fig:prompt1}
\end{figure*}

\begin{figure*}[ht]
    \centering
    \includegraphics[width=\linewidth]{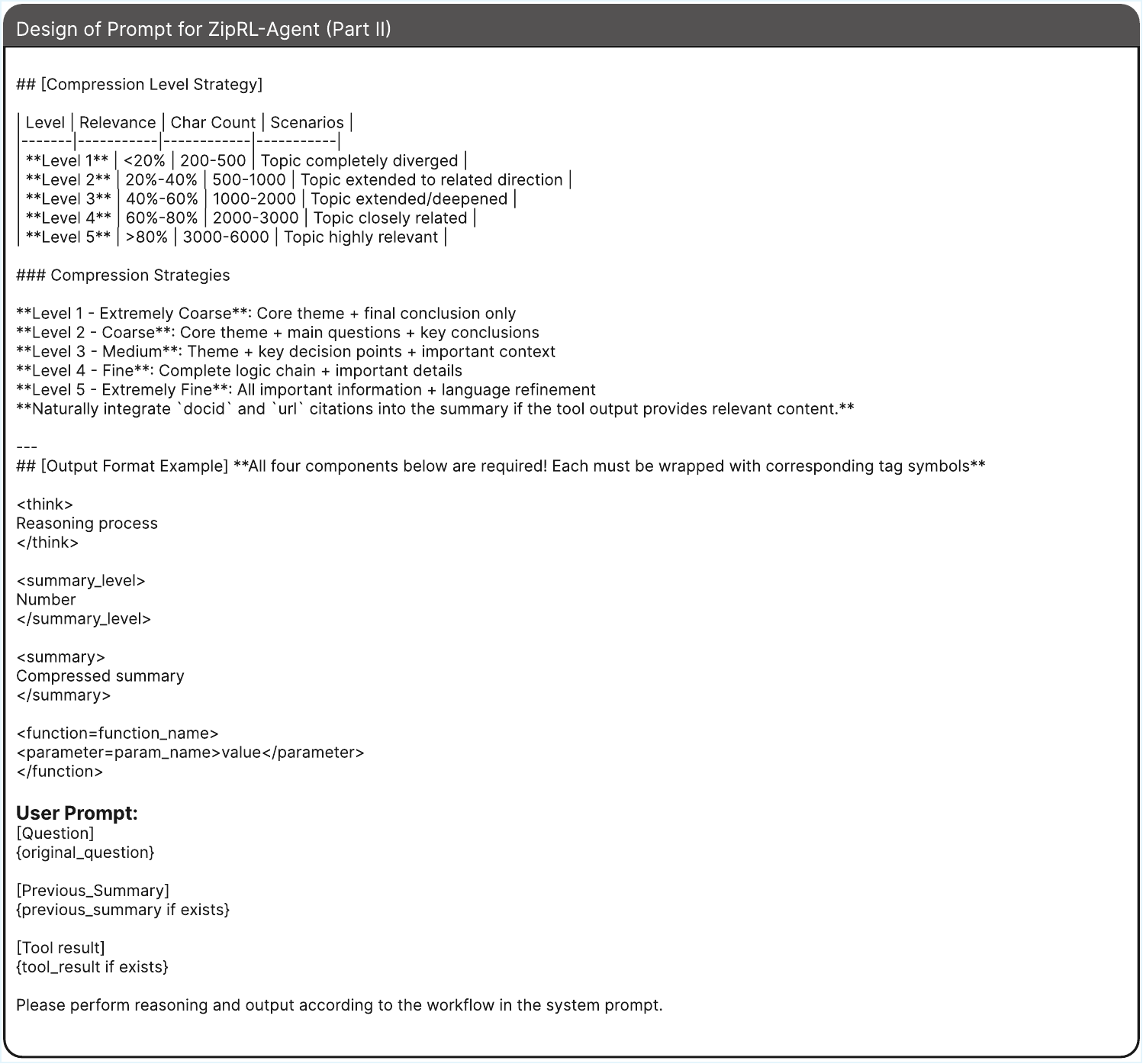}
    \caption{Prompt Design for ZipRL, Part 2.}
    \label{fig:prompt2}
\end{figure*}

\end{document}